\newcommand{\precdot}{\prec\mathrel{\mkern-5mu}\mathrel{\cdot}}
\newtheorem{theorem}{Theorem}
\definecolor{orange}{RGB}{255,127,0}
\definecolor{brown}{RGB}{150,70,0}
\definecolor{green}{RGB}{127,255,127}
\definecolor{darkgreen}{RGB}{0,127,0}
\definecolor{blue}{RGB}{127,127,255}
\definecolor{lightblue}{RGB}{150,150,255}
\definecolor{darkblue}{RGB}{0,0,127}
\definecolor{red}{RGB}{255,90,90}
\definecolor{violet}{RGB}{200,110,170}
\definecolor{grey}{RGB}{127,127,127}
\definecolor{pink}{RGB}{255,180,180}
\newcommand\thefontsize[1]{{#1 The current font size is: \f@size pt\par}}
\newcommand{\eager}{Eager}
\newcommand{\greedy}{Greedy}
\newcommand{\opta}{Optimal-1 }
\newcommand{\optb}{Optimal-2 }
\title{When Redundancy Matters:\\ Machine Teaching of Representations\footnote{This project is supported by NRC project (329745) Machine Teaching for Explainable AI
 }}
\author[1]{Cèsar Ferri}
\author[2]{Dario Garigliotti}
\author[2]{Brigt Arve Toppe Håvardstun}
\author[1]{Josè Hernández-Orallo}
\author[2]{Jan Arne Telle}
\affil[1]{ VRAIN, Universitat Politècnica de València, València, Spain}
\affil[2]{Department of Informatics, University of Bergen, Norway}
\date{June 2023}
\begin{document}
\date{}
\maketitle

% ----------
\begin{abstract}
  In traditional machine teaching, a teacher wants to teach a concept to a learner,  by means of a finite set of examples, the witness set. But concepts %must be expressed in a representational language. %, such as logical formulae, an artificial neural network or a Turing-complete programming language. 
  %Most languages have redundancy, i.e., two representations 
  %(expressions, set of weights or programs)   
%  map to the same concept. 
can have many equivalent representations. This redundancy strongly affects the search space, to the extent that  teacher and learner may not be able to easily determine the equivalence class of each representation. In this common situation, instead of teaching concepts, we explore the idea of teaching representations. %, where the teacher has a representation in mind and wants the learner to identify that same representation, not another representation of the same concept. 
%As the number of alternative equivalent representations increases, %if we want to identify them, we can choose 
%we identify these representations and their witness sets by their size. 
We work with several teaching schemas that exploit representation and witness {\em size} (Eager, Greedy and Optimal) and analyze the gains in teaching effectiveness for some representational languages (DNF expressions and Turing-complete P3 programs). Our theoretical and experimental results indicate that there are various types of redundancy, handled better by the Greedy schema introduced here than by the Eager schema, although both can be arbitrarily far away from the Optimal.  
For P3 programs we found that witness sets are usually smaller than the programs they identify, which is an illuminating justification of why machine teaching from examples makes sense at all.
\end{abstract}

% ----------
\section{Introduction}
\label{sec-intro}

In formal models of \emph{machine learning} we have a concept class $C$ of
possible hypotheses, an unknown target concept $c^* \in C$
and training data given by correctly labelled random examples.
In formal models of {\em machine teaching} a collection of
labelled examples, a witness set $w$, is instead carefully chosen by a teacher $T$, i.e., $w=T(c^*)$, so the learner can reconstruct $c^*$, i.e., $c^*=L(w)$. In recent years, the field of machine teaching has
seen various applications in fields like 
pedagogy~\cite{SHAFTO201455}, 
trustworthy AI~\cite{zhu2018overview} and
explainable AI~\cite{yang2021mitigating,Hvardstun2023XAIWM}.

%In formal models of machine teaching we have a  concept class $C$ shared by teacher and learner, commonly viewed as a family of subsets of a domain $X$, with the set of observed `witnesses' $W=2^{X \times \{0,1\}}$, with each witness being a set of labelled examples from $X$. %\footnote{The same assumption is made in formal models of machine {\em learning}, e.g. PAC learning}. 
%Jose: usual, no need to clarify
%omnipotent
%\snJHO{Removed `omnipotent', too strong, especially if the teacher doesn't know the equivalence classes.}
%The teacher $T : C \rightarrow W$ maps concepts to consistent witnesses, preferably of small cardinality, that will allow the learner $L$ to guess the right concept, i.e. with $L(T(c))=c$ for any $c \in C$.

In reality, concepts are expressed in a representational language, and most languages have redundancy, with several representations mapping to the same concept; this is one common kind of {\em language bias}. For some languages, teacher and learner may not know whether two representations are equivalent ---it may not even be computable. Because of this, we may end up teaching more than one representation of a concept. %\snJHO{Removed the pedagogical purposes... If we want to teach a particular description we would give part of the representation, instead of only examples.} %, for instance if two such representations serve different pedagogical purposes.
For example, the `powers of 2', the `number of subsets of an $n$-element set', and the `number of $n$-bit binary strings', all denote the same subset of natural numbers, but teacher and learner may not know they fall in the same equivalence class. %it is worthwhile
%\snJHO{Rewritten. This was too strong. I think the main motivation should be that teacher and learner may not know they are equivalent.}
%to learn all three representations of this subset.

%\bigskip

%In the pedagogical context we ask how a concept can be optimally taught to a learner by a teacher. For the teacher, the problem is to choose the examples that will most help the learner infer the correct concept, while for the learner the problem is to infer the correct concept, knowing that the teacher is choosing helpful examples. 

This situation is common in most languages with which humans and machines represent concepts, from natural languages to artificial neural networks. 
Consequently, studying how this relation between representations and concepts affects teaching has important implications not only for teaching, but also in machine learning in general (see, e.g., Hadley et al. \cite{hadley1999language}), 
%\snJHO{This is more about the sparsity of the examples than the language, but at least related to learning/teaching.}, 
where the notions of syntactic, semantic and search bias (any language bias) are explored to make a language more suitable for learning than another (see e.g., Muggleton and De Raedt  \cite{muggleton1994inductive}, Whigham \cite{whigham1996search}, Nédellec et al. \cite{bergadanodeclarative}). %\snJHO{The section about syntactic bias in Muggleton and De Raedt is the closest to this}.
The issue is important in any area of AI where there is a non-injective correspondence from concepts to representations, such as explainable AI. Several notions of redundancy, density or compactness in semantics, syntax or representations have been studied in the literature \cite{yu1988can,enflo1994sparse,alpuente2010compact}, 
%\snJHO{The first two ignore the semantics altogether, but Chema says that "algunas clases de lenguajes dispersos tiene dimensión VC infinita y por lo tanto no se pueden PAC-aprender.", while Alpuente is more about the rewritings than the redundancy in the symbols. But this is all I could find.} 
but they do not focus on quantifying redundancy or concentration over the equivalence classes, as we define them in this paper.  

We explore the idea of teaching {\em representations}, where the teacher has a representation in mind and wants the learner to identify it. %that same representation, not another representation of the same concept. 
Different representations $r_1$ and $r_2$ can be in the same equivalence class, a concept $c$, denoted by $[r_1] = [r_2] = c$. We model this much as in the traditional {\em concept} teaching scenario, but now by a set of representations $R$ expressed in a language, %Jose. redundant... %structured in equivalence classes (one class per concept), %$[r] = c$, % with one or more representations being equivalent in a given language,  with each representation being a subset of a domain $X$ but allowing several representations for the same subset, 
and the teacher $T: R \rightarrow W$ being a partial injective mapping of representations to witnesses. We do not require the mapping to be total to accommodate, e.g., the case $|R| > |W|$. As in the classical concept teaching setting, we assume the learner is able to check the consistency between a representation and  a given witness.

We assume the learner has simplicity functions on the set of representations and on witnesses. A similar assumption appears in various models of machine teaching, e.g., the Preference-Based teaching model assumes an ordering on concepts and the Recursive teaching model assumes a partial ordering on witnesses. Furthermore, in the teacher/learner algorithm of~\cite{telle2019teaching}, which we call Eager, the learner employs an ordering on concepts called the learning prior while the teacher wants to minimize the `teaching size', which implicitly sets an order on witnesses. %\snJHO{Rewritten}
%also employs an ordering on witnesses. 
In the Eager algorithm, the learner, when given a witness, will employ Occam's razor and guess the simplest consistent representation, and the teacher will always provide the smallest witness that suffices to learn the target representation. Note this implies the learner will learn at most one representation per concept, namely the simplest one. But the main issue is that if the teacher does not know that $r_1$ and $r_2$ are equivalent, and $r_1$ is simpler than $r_2$, then the teacher will be looking for a witness set for $r_2$ that distinguishes it from $r_1$, something the teacher will never find. %\snJHO{I've added this last sentence as it is important to realise that if equivalence is not trivial or even computable, the teacher may be in trouble or even be incomputable.}

How to alter the Eager algorithm in a natural way so that all the representations of a concept will be taught, not only the simplest one? 
When teaching a target it is quite common to give the smallest observation that suffices to learn it, and the learner could actually expect this from the teacher, and thus rule out representations that could have been taught with a witness smaller than what was given. Herein lies the simple idea behind the slight change to the Eager algorithm into what we call the Greedy algorithm. The teacher uses the smallest consistent witness to teach the simplest representation, removes this witness and representation from further consideration, and repeats. 
Likewise, the learner, when given a witness, will know the teacher behaves in the above way and be able to guess the correct representation.

%The rest of the paper is organized as follows. In section~\ref{sec-defs} we give definitions of the teaching models and of the algorithms called Eager and Greedy, and 
We also introduce another protocol that we call Optimal, as it represents the graph-theoretical best possible under the minimal constraint that a representation must be taught by a consistent witness and that two distinct representations must be taught by two distinct witnesses. In Section~\ref{sec-comp} we show several formal results about the strengths of the three algorithms: Eager, Greedy and Optimal. In particular, on concepts taught by both Eager and Greedy, the latter will never use a larger witness, and often a smaller witness. However, Theorem~\ref{thm-concent} shows that for any concept class, adding redundant representations with small spread in the order, can level out this difference between Eager and Greedy. Moreover, there are situations where the max size witness used by Greedy will be much larger than the Optimal. Finally, in arguing for a lower bound on the performance of Greedy we arrive at a question about which binary matrix on $k$ distinct rows achieves the smallest number of 'projection vectors' (see Section \ref{sec-comp} for details). This question seems to be of combinatorial interest, and we conjecture that its solution is given by the matrix containing the binary representations of the numbers from zero to $k-1$.

In Section~\ref{sec-exper} we report on experiments with these three algorithms, on several representation languages with various kinds of redundancy.
%
%\snJHO{the following six distinct representation languages and witness sets:
%\begin{itemize}
    %\item 
  %  P3: Turing-complete language: huge $R$ compared to $W$, huge redundancy, huge number of possible outputs, Greedy vs Eager (smaller witness) 7\%
    %\item 
  %  3-DNF and witness $\leq 5$: small R (256), no redundancy, 2,4,8,16,32 possible outputs,  Greedy vs Eager 94\%
    %\item 
  %  3-Term DNF and witness $\leq 5$: some redundancy (12), 2-32 outputs, Greedy vs Eager 97 \%
    %\item 
  %  3-Term DNF Perm and witness $\leq 5$: high redundancy (321) with redundant reps near in order, 2-32 outputs, Greedy vs Eager 88\%
    %\item
  %  3-Term DNF Perm and witness $=5$: high redundancy (321) and near in order, always 32 possible outputs (high), Greedy vs Eager 30\%
    %\item
 %   3-Term DNF Perm + Duplicates and witness $\leq 5$: high redundancy (643) and even more near in order, 2-32 possible ouputs, Greedy vs Eager 81\%
%\end{itemize}
%}
Tables~\ref{table-small-results-overview_graph}, \ref{table-small-results-domains} and \ref{table-small-results-greedy_vs_eager} in the last section summarize the results. 
The main finding is that 
%Greedy not only teaches many more representations than Eager but 
while Greedy needs more witness sets to teach all representations, it uses them more effectively to also teach more concepts, and it does so while teaching several common concepts by a smaller witness.
%, albeit not as small as Optimal, and never using a larger witness.
The experiments were furthermore chosen to answer the following question: what characterizes a language where Greedy teaches only a few common concepts by a smaller witness than Eager? We show that this is not directly related to the {\em amount} of redundancy, but rather to the {\em spread} of the redundant representations in the order, see Figure \ref{fig-preformance_vs_red_spread},
%, and ii) the Density of the consistency graph, 
with Eager performing better the lower the Redundancy Spread. In Figure \ref{fig:2scat-bits-greedy}  we see that for P3 programs the witness sets are usually smaller than the programs they identify, which is an illuminating justification of why machine teaching from examples makes sense at all.
%Low density is highly correlated with many possible labellings of examples/witnesses, which in our case is the output range. %\snJHO{I've removed the rest of this paragraph. Too much detail for the intro.....Thus, the two cases in the itemized list above where the output range is high (P3 and when witness size is exactly 5), and thus density lower, are also the two where Greedy performs better than Eager on the fewest common concepts, 7\% and 30\% respectively, while the other cases score percentages of 97, 94, 88 and 81, and the latter occurs when there is the most nearness of the redundant representations in the order.}

Teaching representations rather than concepts is a natural setting as teachers represent concepts in a given language, and size preferences (e.g., Occam's razor) apply to all representations and not only to the canonical one in the equivalence class, which teacher and learner may not know. The Eager and Greedy protocols are two extremes (one teaching a single representation per concept and the other teaching all representations) in a spectrum of approaches that can shed light to important phenomena such as the effect of redundancy in inductive search, the actual bias in size-related priors and the relevance of syntax over semantics in explanations. 

There  are many practical situations where representations rather than concepts matters. For instance, the Promptbreeder from GoogleMind \cite{DBLP:journals/corr/abs-2309-16797} that generates various prompts (representations) to get the same answer (concept), or the use of Codex in teaching \cite{DBLP:conf/ace/Finnie-AnsleyDB22} which produces various programs (representations) having the same behaviour (concept). %This could possibly (a bit far-fetched but why not) be tied to a 'practical' situation that teaches programming through examples.  
Having said that, our main contribution is not practical, it is rather the insight about highlighting the distinction between concepts and representations, and showing theoretically and experimentally how small witness sets can be when teaching, given the existing redundancy. %address the new problem first in theory and for rich description languages to see the general properties, before working on specific papers for case studies, particular applications and narrow languages. 
We conclude with a discussion in Section \ref{sec-disc}.
%This is a common situation when languages are complex and especially when a teaching protocol is used %the enormous amount of equivalent representations that 
%\todo{We need a paragraph about why this is so important!}

%Greedy better than Eager on 7-14 \% for P3 and 90\% for Bool. Why?Current argument: Note that Eager uses 30\% of W for P3 and 5\% for Bool.Greedy uses 100\% so it has many more chances of getting a better one on Bool than on P3.
 %If we accept this explanation, we can then ask, why Eager uses more of W for P3 than for Bool?
 %Current argument: The expected \% of R covered by a witness is much lower for P3 than for Boolean (this since 1: P3-examples have many outputs while Bool has only 2, so higher chance that a witness will cover many representations in Boolean, and 2: when we compute the expected \% we use 100*((#edges/W)/R) and R is much bigger for P3)
%The chance that a witness will be used by Eager decreases when the coverage increases, as there are then not so many programs that are the earliest for some witness.

%
% --- Figure: Eager vs Greedy vs Optimal w.r.t. a consistency graph
% (!) At this point in the text to ensure that it renders in the right-side column
%\input{fig-example_of_eager_vs_greedy_vs_optimal}
%
%
% ----------
\section{Machine Teaching definitions}
\label{sec-defs}

Various models of machine teaching have been proposed, {e.g.,} the classical teaching dimension model~\cite{goldman1995complexity}, the optimal teacher model~\cite{balbach2008measuring}, recursive teaching~\cite{zilles2011models},  preference-based teaching~\cite{gao2017preference}, or no-clash teaching~\cite{no-clash}.
These models differ mainly in the restrictions they impose on the
learner and the teacher to avoid collusion.
The common goal is to keep the {\em teaching dimension}, i.e., the size of the largest teaching
set, $\max_{c \in C}|T(c)|$, as small as possible.

In all these models, we have a set $C$ of concepts, with each concept being a subset of a domain $X$. In this paper we consider 
more than one {\em representation} for the same concept, thus with $R$ being a set of representations that constitute a multiset of the concepts $C$ that are the subsets of $X$. 
The teacher $T : R \rightarrow W$ is now a partial injective mapping from representations $r \in R$ to a set of observations $w \in W$ (often $W \subseteq 2^{X \times \{0,1\}}$ so that $w$ is a set of negatively or positively labelled examples from $X$) and the learner $L: W \rightarrow R$ is a  partial mapping in the opposite direction. 
As usual, teacher and learner share the consistency graph $G_R$ on vertex set $R \cup W$ with a representation $r \in R$ adjacent to a witness $w \in W$ (and thus $rw \in E(G_R)$ an edge) if $r$ and $w$ are consistent, i.e., with positive (resp. negative) examples in $w$ being members (resp. non-members) of $r$. 
Naturally, $r$ must be consistent with  $T(r)$ and $L(w)$ must be consistent with $w$, and a successful teacher-learner pair must have $L(T(r_1))=r_2$ such that $[r_1]=[r_2]$. In what follows, we will assume $L$ and $T$ use the same representation language and we will impose that $L(T(r))=r$.

%\snJHO{This is how we work in this paper, but with r and R for representations and its set, then we may have situations where $L(T(r_1)) = r_2$ and $r_1$ and $r_2$ are in the same (concept) equivalence class.} %and the goal is to achieve this while minimizing the {\em teaching dimension}, which is the maximum cardinality of $T(c)$ over all $c \in C$.

Let us take the graph theoretical view. The teacher and learner mappings are matchings in $G_R$ between representations and witnesses. 
Two vertices are called twins if they have the same set of neighbors.
Usually there is a bijection between the equivalence classes of twins of $R$ in the graph $G_R$, that we denote by $R/W$, and the set of concepts $C$. However, if the witness set $W$ is too sparse then $R/W$ may be a coarsening of $C$.

We will be comparing two models for teaching representations, that we call Eager and Greedy. Both assume some natural size functions on representations $R$ and on witnesses $W$, and use these to arrive at two total orderings $\precdot_R$ on $R$ and $\precdot_W$ on $W$.
%Here is a convenient way to define a total ordering on $C$, derived from the descriptions given to concepts.
For example, if the representations in $R$ are expressed in some description language, which
can be English, or a programming language, or some set of mathematical formulas, then a representation consists of finite strings of symbols drawn from some fixed alphabet $\Sigma$, given  with a total order which will be used to derive a lexicographic order on strings over this alphabet, with shorter strings always smaller, sometimes called shortlex. 
%For concreteness, we may let $\Sigma = \{0, 1\}$ with 0 preceding 1. A description language for the concept class $C$  is a function $d : C \rightarrow \Sigma^*$, mapping each $c \in C$ to a string $d(c)$, called the description\snJHO{But this should be the role of representations, so we shouldn't need $d(r)$} of $c$. These are used to give a total ordering $\precdot_C$ on $C$ by $c \precdot_C c'$ if and only if $d(c)$ lexicographically smaller than $d(c')$.
To define a total ordering on witnesses, we can do something similar.
%but may also take a slightly different approach by using prefix codes, like Elias coding, as we will see later.\snJHO{Reference?}
%\snJHO{Now it's not used (no appendix), so simply remove any comment about Elias coding here?}
%\snJAT{I removed it}

Thus, the input to these algorithms is what we call an ordered consistency graph consisting of the 3-tuple $(G_R,\precdot_R, \precdot_W)$ and when we talk about a vertex (or representation/witness) appearing earlier than another we mean in these orderings. In the Eager model we have $L(w)=r$ for $r$ being the earliest representation (in the order  $\precdot_R$) consistent with $w$. The teacher, knowing that the learner behaves in this way, will construct the mapping $T:R \rightarrow W$ iteratively as follows: go through $W$ in the order of  $\precdot_W$, and for a given witness $w$ find the earliest $r \in R$ with $rw \in E(G_R)$, and if $T(r)$ not yet defined then set $T(r)=w$ else continue with next witness.
%will set $T(c)=w$ with $w$ being the earliest witness, in the order $PW$, for which $L(w)=c$.
In~\cite{telle2019teaching} the Eager model was used to teach programs in the Turing-complete language P3, by witnesses containing specified I/O-pairs. Many P3 programs have equivalent I/O-specifications, and we can observe that in the Eager model only the earliest representation (program) of any concept (I/O-specification) is taught.  

In order to teach all representations we introduce the Greedy model, where we make a slight change to the way the teacher constructs its mapping, as follows: go through $W$ in the order of  $\precdot_W$, and for a given witness $w$ find the earliest $r \in R$ with $rw \in E(G_R)$ such that $T(r)$ is not yet defined, then set $T(r)=w$ and continue with next witness (if no such $r$ exists then drop this $w$). Teaching and Learning in the Greedy model is thus done following an order, similar to what happens in the Recursive teaching model of~\cite{zilles2011models} (teach sequentially and remove concepts that have already been taught), using also an order on the representations, similar to what is done in the Preference-Based teaching model of~\cite{gao2017preference} (return the most preferred remaining concept consistent with a given witness).

%\snJHO{I suggest to remove this paragraph, or at least its first part.}\snJAT{Why remove it? Don't we need to specify how to compare them?}\snJHO{If we want to give a definition, let's just say that the TD for concepts can be extended to representations as well. Actually, we should clarify when we talk about TS/TD of concepts and representations, and this paragraph is very confusing about this, when  "we compare..." while the comparison can also be applied to optimal or anything.}

In classical machine teaching, the goal is to minimize the teaching dimension of a concept class, i.e. to find a legal teacher mapping $T:C \rightarrow W$ where the maximum of $|T(c)|$ over all $c \in C$ is minimized.
%, i.e. the maximum number of examples in any witness used to teach a concept. 
When there is a total mapping $T: R \rightarrow W$ we do the same for teaching representations, except we use $size(T(c))$ rather than cardinality, as in \opta defined below. However, a total mapping may not exist, and in that case we will in this paper define an alternative measure, as in \optb below, being the maximum number of representations covered when constrained to partial teacher mappings covering the maximum number of concepts.
%We compare the performance of the Eager and Greedy models by how many concepts (and representations) they actually teach, and also by the objective function which attempts to minimize the maximum size witness used.
%Note that if each concept has only one representation, and the size of a witness $w$ is its cardinality (number of examples), and every concept is taught, then this objective function corresponds to the standard Teaching Dimension.
We thus define the Optimal model in two flavors.

\opta: this is the minimum max size witness used over all teacher mappings, under the sole restriction that the teacher mapping is injective and assigns a consistent witness to every representation. This value can be computed in polynomial time by a binary search for the smallest $k$ such that the induced subgraph $G_R^k$ of $G_R$, on $R$ and all witnesses of size at most $k$, has a matching saturating $R$, i.e., containing all of $R$. 

\optb: as \opta is undefined when $G_R$ has no matching saturating $R$, 
we define \optb as the maximum number of representations covered by any matching that maximizes the number of concepts covered.
It is not clear that this number is computable in polynomial time, but for most graphs 
%\snJAT{For Small-P3 not sure \optb computes max nr of reps under restriction that max nr of concepts. Can we show NP-hardness?}
in the experimental section we could compute it.

\section{Formal comparisons}
\label{sec-comp}

We show several results comparing the performance of the three algorithms Eager, Greedy and Optimal.
% -----
%\subsection{Greedy by Witness ordering or by Representation Ordering are equivalent} \label{subsec-gbwo-gbro}
We start with Figure~\ref{fig-example_of_eager_vs_greedy_vs_optimal} giving the simplest possible consistency graph where Eager, Greedy and \opta behave differently.

The teacher mapping $T: R \rightarrow W$ for Greedy, as defined in previous section, is computed by an iterative procedure whose outer loop follows the witness ordering (and inner loop representation ordering).  Let us call it $T_W: R \rightarrow W$.
Consider the alternative $T_R: R \rightarrow W$ which switches these two:
go through $R$ in the order of $\precdot_R$, and for a given representation $r$ find the earliest $w \in W$ with $rw \in E(G_R)$ such that $T_R^{-1}(w)$ is not yet defined, then set $T_R(r)=w$ and continue with the next representation (if no such $w$ exists then drop this $r$). 

\begin{theorem}
The teacher mappings returned by Greedy following $\precdot_W$ versus the alternative following $\precdot_R$ are the same.
\end{theorem}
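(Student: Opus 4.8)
The plan is to show the two matchings coincide by strong induction on $|R|+|W|$, at each step isolating a single edge that \emph{both} procedures are forced to include, and then peeling it off. The guiding intuition is that $T_W$ and $T_R$ are both greedy matchings for one underlying ``doubly global'' preference structure, in which every representation prefers $\precdot_W$-earlier witnesses and every witness prefers $\precdot_R$-earlier representations. Such aligned preferences pin down a unique greedy (indeed stable) matching, so both orderings of the loops must recover it; the induction makes this precise without importing stable-matching machinery.

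First I would locate the common first edge. Let $w_1$ be the $\precdot_W$-minimal witness having at least one neighbor in $G_R$, and let $r^*$ be the $\precdot_R$-minimal neighbor of $w_1$ (if $G_R$ has no edges, both procedures return the empty matching and we are done). On the $T_W$ side this is immediate: all witnesses earlier than $w_1$ have no neighbor and are dropped, so $w_1$ is the first witness actually matched, and since nothing is matched yet its $\precdot_R$-earliest available neighbor is $r^*$; hence $(r^*,w_1)\in T_W$. On the $T_R$ side the key observation is that \emph{no} representation earlier than $r^*$ in $\precdot_R$ is adjacent to $w_1$, by minimality of $r^*$ among the neighbors of $w_1$; therefore $w_1$ is still unmatched when $T_R$ processes $r^*$. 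Moreover $w_1$ is the $\precdot_W$-earliest neighbor of $r^*$ (any neighbor of $r^*$ is a witness having a neighbor, hence not earlier than $w_1$), so $T_R$ sets $T_R(r^*)=w_1$ as well. Thus both procedures contain the same edge $(r^*,w_1)$.

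Next I would peel this edge and invoke the inductive hypothesis on $G' := G_R - r^* - w_1$. For $T_W$ this is the routine ``memorylessness'' of greedy: after committing $w_1\mapsto r^*$, the remainder of the run is exactly $T_W$ executed on $G'$, giving $T_W|_{G_R} = \{(r^*,w_1)\}\cup T_W|_{G'}$. The main obstacle is the analogous statement for $T_R$, because $r^*$ need not be the first representation processed. Here I would use the minimality of $r^*$ twice: (i) every representation earlier than $r^*$ in $\precdot_R$ is non-adjacent to $w_1$, so deleting $w_1$ cannot change how $T_R$ matches these earlier representations, and since they are all processed before $r^*$, deleting $r^*$ cannot affect them either --- hence the partial matching on $\{r' : r'\precdot_R r^*\}$ is identical in $G_R$ and $G'$; and (ii) once $T_R$ reaches the representations later than $r^*$, the only difference between the two runs is that $w_1$ is unavailable (taken by $r^*$ in $G_R$, deleted in $G'$), so every later representation sees the same set of available witnesses and is matched identically. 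This yields $T_R|_{G_R} = \{(r^*,w_1)\}\cup T_R|_{G'}$.

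Finally, since $|R'|+|W'| = |R|+|W|-2$, the induction hypothesis gives $T_W|_{G'} = T_R|_{G'}$, and combining with the two peeling identities gives $T_W|_{G_R} = T_R|_{G_R}$, closing the induction. I expect step~(i) --- verifying that deleting $r^*$ and $w_1$ leaves the processing of all earlier representations untouched --- to be the delicate point to write carefully, since it is precisely where the fact that $r^*$ is the smallest neighbor of the smallest witness does the real work.
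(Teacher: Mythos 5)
Your proof is correct, but it takes a genuinely different route from the paper's. The paper argues by a single induction along $\precdot_W$, proving the pointwise invariant that whenever $T_W(r)=w$ then also $T_R(r)=w$: by the inductive hypothesis no witness earlier than $w$ gets assigned to $r$ under $T_R$, so when $T_R$ processes $r$, witness $w$ is its earliest still-available neighbor. You instead isolate the globally forced first edge $(r^*,w_1)$ --- the $\precdot_R$-minimal neighbor of the $\precdot_W$-minimal non-isolated witness --- show both runs must contain it, and peel it off, closing with strong induction on $|R|+|W|$; your verification of the delicate step (that deleting $r^*$ and $w_1$ leaves the processing of all $\precdot_R$-earlier representations untouched, because none of them sees $w_1$ and all are processed before $r^*$) is exactly right. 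Your decomposition buys symmetry: equality of the two matchings falls out directly, whereas the paper's invariant as stated yields only the inclusion $T_W \subseteq T_R$, and the reverse inclusion needs a short unstated supplement (if $T_R(r)=w$ but $w \notin T_W(R)$, then $T_W$ dropped $w$, so every neighbor of $w$, including $r$, received a $\precdot_W$-earlier witness under $T_W$, and the invariant then forces $T_R(r) \precdot_W w$, a contradiction). What the paper's route buys is economy and generality: one pass of induction with no graph surgery, and since it inducts along $\precdot_W$ rather than on cardinalities, it applies verbatim when $R$ is infinite --- which matters here, as the P3 experiments use a ``potentially infinite'' set of programs. Your strong induction on $|R|+|W|$ presumes both sets finite; it adapts (peel one edge at a time, noting that the size-derived orders $\precdot_R$ and $\precdot_W$ have finite initial segments, so every edge of either matching is reached after finitely many peels), but as written this is a restriction the paper's argument does not have.
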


\begin{proof}
We prove, by induction on $\precdot_W$, the statement (*): "for any $w \in W$ if $T_W(r)=w$ then also $T_R(r)=w$".
Let $w_1$ be the earliest witness. We have $T_W(r')=w_1$ and when assigning $T_R$ clearly $w_1$ is the earliest neighbor of $r'$ with $T_R^{-1}(w_1)$ undefined so that $T_R(r')=w_1$. For the inductive step, we assume the statement (*) for all witnesses earlier than $w$, and assume that  $T_W(r)=w$.  By the inductive assumption (*) we know that no witness earlier than $w$ will by $T_R$ be assigned to $r$. This means that when assigning $T_R(r)$ then $w$ is the earliest neighbor of $r$ with $T_R^{-1}(w)$ undefined, and thus we have $T_R(r)=w$ as desired.
\end{proof}

% -----
%\subsection{Greedy never worse than Eager on common concepts}\label{subsec-gnwe}

Since the Greedy teaching is only a slight change to the Eager teaching, it is easy to see that for any $r \in R$, if Eager assigns $T_E(r)=w_E$ then Greedy will also assign some $T_G(r)=w_G$, and in the order $\precdot_W$ we could have $w_G$ earlier than $w_E$, but not the other way around. 

\begin{theorem}
On representations/concepts taught by Eager, Greedy will never use a larger witness.
\end{theorem}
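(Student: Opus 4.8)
The plan is to argue by induction on the witness ordering $\precdot_W$, maintaining an invariant that couples the two partial maps $T_E$ and $T_G$ as they are built up in lockstep (both outer loops scan $W$ in the same order $\precdot_W$). Write $E$ for the set of representations already in the domain of $T_E$ and $G$ for those already in the domain of $T_G$ at a given stage, and write $\preceq_W$ for ``$\precdot_W$ or equal''. The invariant I would carry is: (i) $E \subseteq G$, and (ii) for every $r \in E$ we have $T_G(r) \preceq_W T_E(r)$. Since $\precdot_W$ refines the size order on witnesses (shorter/smaller witnesses come first), (ii) immediately yields $size(T_G(r)) \le size(T_E(r))$; this is exactly the claim for every representation taught by Eager, and hence also for every concept, since a concept taught by Eager via $r$ is taught by Greedy via that same $r$ with a witness no larger.

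For the base case, before any witness is processed both domains are empty, so the invariant holds vacuously. For the inductive step I would fix the current witness $w$ and split on what Eager does. If Eager skips $w$ --- which happens exactly when the earliest representation consistent with $w$ already lies in $E$ --- then $E$ is unchanged while $G$ can only grow, so both parts of the invariant persist. The interesting case is when Eager assigns $T_E(r^*)=w$, where $r^*$ is the earliest representation with $r^* w \in E(G_R)$ and $r^* \notin E$. Here I must show $r^* \in G$ afterwards with $T_G(r^*)\preceq_W w$.

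The key observation driving this case is that $r^*$ being the \emph{earliest} representation consistent with $w$ means every representation preceding $r^*$ in $\precdot_R$ is inconsistent with $w$. Consequently, when Greedy scans for the earliest consistent representation not yet in $G$, nothing before $r^*$ is even eligible. So there are only two possibilities. Either $r^*$ is still unassigned by Greedy when $w$ is processed, in which case Greedy necessarily picks $r^*$ itself and sets $T_G(r^*)=w=T_E(r^*)$; or $r^*$ was already assigned by Greedy at some earlier witness, in which case $T_G(r^*)\precdot_W w = T_E(r^*)$ (this latter possibility is precisely where the two algorithms diverge, since Greedy may have matched $r^*$ to a witness that Eager had skipped). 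In both subcases $r^* \in G$ and $T_G(r^*)\preceq_W w$, so the invariant is restored and the induction closes.

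The step I expect to require the most care is ruling out the apparent danger in the assignment case: that Greedy, processing $w$, might ``spend'' $w$ on some representation other than $r^*$ and leave $r^*$ permanently unmatched, breaking $E\subseteq G$. The observation above --- that $r^*$ is the minimum-order consistent vertex, so no eligible candidate can precede it --- is exactly what forecloses this, and I would state and use it explicitly, as it is the crux that makes the coupled induction go through. The remaining bookkeeping (the monotone growth of $G$ and the size-refinement of $\precdot_W$) is routine.
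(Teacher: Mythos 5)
Your proof is correct and takes essentially the same route as the paper's: the crux in both is that $T_E(r)=w_E$ forces $r$ to be the earliest representation consistent with $w_E$, so when Greedy reaches $w_E$ it tries $r$ first and either assigns $w_E$ to $r$ or has already assigned $r$ at an earlier (hence no larger) witness. Your explicit coupled induction with the invariant $E \subseteq G$ and $T_G(r) \preceq_W T_E(r)$ simply formalizes the bookkeeping that the paper's one-sentence argument leaves implicit.
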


\begin{proof}
This since $T_E(r)=w_E$ implies that $r$ is the earliest representation consistent with $w_E$, so when the Teacher mapping of Greedy is computed then when reaching $w_E$ the first representation tried will be $r$, and if $T_G(r)$ has already been defined it will be for an earlier and thus no larger witness. %We have shown
\end{proof}

%We have a bipartite graph $G$, and will denote the bipartite matching found by $\textit{Orignial algorithm}$ as $M^G_{\text{Org}}$ on the graph G, similarly we denote the bipartite matching found by $\textit{Greedy algorithm}$ as $M^G_{\text{Gd}}$. We leave out the $G$ if it is intuitive, or not relevant. 

% say a concept $c$ is taught by a witness $w$ in the matching $M$ by $(c,w) \in M$. We also define that $M(w)=c$ s.t. $(c,w)\in M$, similarly $M(c)=w$.
%Let $s$ be the size function. 
%\begin{lemma}
% $\forall (w,c) \in M_{\text{org}}: \, s(w) \geq s(M_{\text{gd}}(c))$
% \end{lemma}
 
%Proof: Let $(w,c)\in M_{\text{org}}$. When $\textit{Greedy algorithm}$ is assigning $w$, there are two possibilities.

%\begin{itemize}
%\item 1: $\exists (w',c) \in M_{\text{Gd}}$. As $w'$ was assigned before $w$ in $\textit{Greedy algorithm}$ $s(w') \leq s(w)$, upholding the claim.
%\item 2: $\nexists (w',c) \in M_{\text{Gd}}$. As $(w,c)\in M_{\text{Org}}$, by observation [], we know that $c$ is the smallest concept compatible with $w$. Hence $(w,c)$ will be in $M_{\text{Gd}}$. This upholds the claim. 
%\end{itemize}

%\begin{lemma}
%$\exists G,m$ s.t. $\exists (c,w)\in M^G_{\text{org}}$ and $(c,w') \in M^G_{\text{Gd}}$ s.t. $s(w) \geq s(w') + m$
%Greedy can be arbitrarily much better. 
%\end{lemma}

% -----
%\subsection{High Redundancy where copies are Near can make Eager and Greedy almost equal}
%\label{subsec-red}

We show that for any concept class we can add redundant copies of representations consecutive in $\precdot_R$ to make Eager and Greedy perform almost identical.

\begin{theorem}
\label{thm-concent}
    For any concept class $R$ (i.e. each concept has a unique representation) on witness set $W$ and orders $\precdot_R,\precdot_W$, we can add less than a total of $|W|$ copies of representations to get $R'$, and make $\precdot_{R'}$ an extension of  $\precdot_R$ with all copies consecutive, such that Eager teaches the same on $(G_R, \precdot_R, \precdot_W)$ as on $(G_{R'},\precdot_{R'},\precdot_W)$ and using the same witnesses as Greedy on $(G_{R'},\precdot_{R'},\precdot_W)$ on all common concepts.
    %, and where Greedy will not teach any more concepts than Eager.
\end{theorem}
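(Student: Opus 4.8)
The plan is to build $R'$ by inserting, immediately after each original representation $\rho$, a block of redundant copies of $\rho$ whose size is exactly the number of witnesses that Eager \emph{wastes} on $\rho$. For each $\rho \in R$ let $W_\rho$ be the set of witnesses whose earliest $\precdot_R$-consistent representation is $\rho$. These sets are pairwise disjoint and their union is the set of all witnesses having at least one consistent representation, so $\sum_\rho |W_\rho| \le |W|$. Eager teaches $\rho$ precisely when $W_\rho \neq \emptyset$, and then by the witness $\min_{\precdot_W} W_\rho$, wasting the remaining $|W_\rho|-1$ witnesses of $W_\rho$. I would add exactly $|W_\rho|-1$ copies of each such $\rho$, placed consecutively right after $\rho$, and extend $\precdot_R$ to $\precdot_{R'}$ accordingly, so the copies sit at $\rho$'s position and the spread is minimal. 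The total number of copies is $\sum_\rho(|W_\rho|-1) = \sum_\rho|W_\rho| - |\{\rho : W_\rho\neq\emptyset\}| \le |W| - 1 < |W|$, as required (and it is $0 < |W|$ in the degenerate case where no consistent pair exists).

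First I would check that Eager is unaffected. Each copy is a twin of its original and is placed immediately after it, so for every witness $w$ the earliest consistent representation in $G_{R'}$ coincides with the one in $G_R$: a copy is never earliest, since its original is consistent with the same witnesses and precedes it. Hence Eager on $(G_{R'},\precdot_{R'},\precdot_W)$ teaches exactly the original representations, by exactly the witnesses it uses on $(G_R,\precdot_R,\precdot_W)$, and never teaches a copy.

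The core of the argument is that Greedy on $R'$ teaches every common concept by the same witness as Eager. Here I would invoke the equivalence of the two Greedy orderings (Theorem~1) and analyze Greedy through its representation-ordered variant $T_R$, processing representations in $\precdot_{R'}$-order and assigning to each the earliest still-available consistent witness. The engine is the invariant, proved by induction on $\precdot_{R'}$, that after processing $\rho$ together with its block of copies the set of used witnesses equals $\bigcup_{\rho_0 \preceq_R \rho} W_{\rho_0}$. For the inductive step at an original $\rho$ with $W_\rho\neq\emptyset$: every witness consistent with $\rho$ either lies in $W_\rho$ or has an earlier earliest-consistent representation $\rho_0\precdot_R\rho$, hence lies in $W_{\rho_0}$ and is already used by the inductive hypothesis; therefore the available consistent witnesses for $\rho$ are exactly $W_\rho$, and $\rho$ is assigned $\min_{\precdot_W}W_\rho$, matching Eager. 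The $|W_\rho|-1$ copies then consume exactly the remaining witnesses of $W_\rho$ (being twins, their available consistent witnesses are again confined to $W_\rho$), which restores the invariant; a representation with $W_\rho=\emptyset$ finds all its consistent witnesses already used and is assigned nothing. By Theorem~1 the witness-ordered Greedy produces the same assignment, so Greedy and Eager agree on every common concept.

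The step I expect to be the main obstacle is exactly this confinement: ruling out that a copy of $\rho$ is captured by a witness outside $W_\rho$, which would trigger a cascade in which a spilled-over wasted witness teaches a genuine later representation early --- the very phenomenon that separates Greedy from Eager. The disjointness of the $W_\rho$, combined with the representation-ordered view of Greedy granted by Theorem~1, is what closes this gap: it pins the available consistent witnesses of each $\rho$ and of its copies to the single block $W_\rho$, so the copies absorb precisely the witnesses Eager wasted and nothing leaks forward.
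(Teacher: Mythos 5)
Your proof is correct, and the construction is exactly the paper's: you add $|W_\rho|-1$ copies of each representation $\rho$ (the paper's $f_r-1$, with $f_r=|W_\rho|$ the number of witnesses whose earliest consistent representation is $\rho$), placed consecutively right after $\rho$, and the counting argument is the same. Where you genuinely diverge is in the verification. The paper proves a single loop invariant by induction over $\precdot_W$, stating for the $k$th witness of each block $W_{\rho}$ what Eager does (on both $G_R$ and $G_{R'}$) and what Greedy does on $G_{R'}$; it never invokes the order-exchange result (the paper's Theorem~1) and certifies Eager's invariance and Greedy's behavior simultaneously within one induction. You instead decouple the two claims: Eager's invariance follows from a short twin argument (a copy is never the earliest consistent representation), and Greedy is analyzed in its representation-ordered form $T_R$, licensed by Theorem~1, with the invariant that after processing $\rho$ and its block the used witnesses are exactly $\bigcup_{\rho_0 \preceq_R \rho} W_{\rho_0}$. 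Your route buys a cleaner localization of the crux --- the confinement of each block's assignments to $W_\rho$ via disjointness of the $W_\rho$'s, which you rightly flag as the step where leakage could otherwise cascade --- and it is more careful at the margins: you handle representations with $W_\rho=\emptyset$ (where the paper's ``$f_r-1$ copies'' reads awkwardly) and witnesses with no consistent representation, tightening the bound to $\sum_\rho(|W_\rho|-1)\le |W|-1$. The cost is a dependency on Theorem~1 that the paper's self-contained witness-order induction avoids; conceptually the two inductions are dual (yours sweeps $\precdot_{R'}$, the paper's sweeps $\precdot_W$), and both establish the same per-block accounting.
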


\begin{proof}
Consider the ordered consistency graph $(G_R, \precdot_R,\precdot_W)$ and for each representation $r \in R$ compute $f_r=|\{w \in W: rw \in E(G_R) \wedge (r'w \in E(G_R) \Rightarrow r \precdot_R r')\}|$, i.e. the number of witnesses for which $r$ is the earliest neighbor. Let $R'$ be the representation class corresponding to the ordered consistency graph $(G_{R'}, \precdot_{R'},\precdot_W)$ where for each $r \in R$ we add $f_r-1$ copies of $r$  and extend the order $\precdot_R$ to $\precdot_{R'}$ by putting all the copies consecutively right after $r$, so there are $f_r$ consecutive copies total of $r$.
Note that, as the sum of $f_r$ over all $r \in R$ is $|W|$ (each witness has a single earliest $r$) we add less than $|W|$ new representations.

For each $w \in W$ define $r_w$ to be the earliest neighbor of $w$ in $G_R$ and let $r'_w$ be the earliest neighbor in $G_{R'}$. For simplicity, when we say $G_R,G_{R'}$ we mean the ordered consistency graphs. We prove the following loop invariant on any witness $w \in W$ by induction over $\precdot_W$: "Knowing $r_w$ has $f_{r_w}$ witnesses for which it is the earliest, let $w$ be the $k$th such witness. Both on $G_R$ and $G_{R'}$, if $k=1$ then Eager will assign $r_w$ to $w$, and if $k>1$ Eager will not use $w$. Greedy on $G_{R'}$ will assign $r$ to $w$ for $r$ being the $k$th copy of $r_w$ in the $\precdot_{R'}$ order."
The loop invariant is trivially true for the first witness. For the inductive step, we know $r_w$ is the earliest neighbor of $w$ and that $w$ shares this property with $f_{r_w}$ witnesses. If $w$ is the first of these witnesses, then it is clear that both Eager and Greedy will assign $r_w$ to $w$, as the loop invariant tells us no earlier witness has been assigned to $r_w$. If $w$ is the $k$th copy of $r_w$ for $k \geq 2$ then Eager will not use $w$, neither on $G_R$ nor $G_{R'}$, while for Greedy on $G_{R'}$ as there are at least $k$ copies of $r_w$ all in consecutive order then Greedy will assign $w$ to the $k$th copy as it must be available by the loop invariant.
We have thus proven the loop invariant, and this implies the statement of the Theorem as it encompasses all representations taught by both algorithms.
\end{proof}

% === Figure: Eager vs Greedy Optimal
%\input{fig-example_of_eager_vs_greedy_vs_optimal}
%
\if 0

\begin{figure}[t]
  \centering
  \includegraphics[width=1\linewidth]{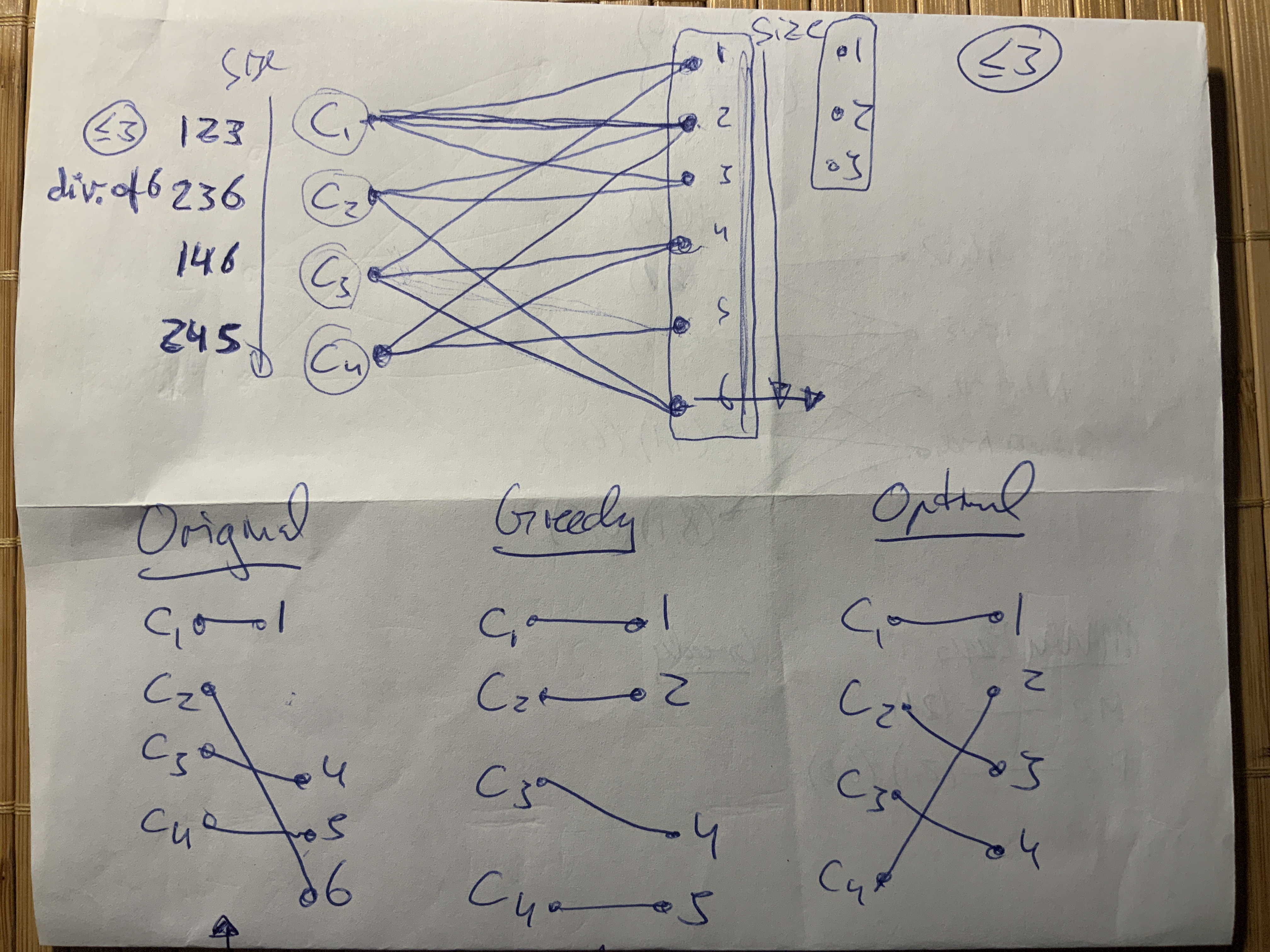}
  \caption{The simplest possible example showing that Eager, Greedy and Optimal can use different max size witnesses, assuming $w_i$ has size $i$.}
  \label{fig:test}
\end{figure}

\fi

\begin{figure}[t]
\begin{center}
\begin{tabular}{@{}c}
\begin{tikzpicture}[scale=0.65, transform shape]  % --- Consistency graph
    \def\globalSpaceC{1}
    \def\globalSpace{1.0}
    % Concepts on the left side
    \node[circle, draw, minimum size=0.2cm, align=center] (c_1) at (-3.5,0){\small $c_1$};
    \node[circle, draw, minimum size=0.2cm, align=center] (c_2) at (-3.5,-\globalSpaceC){\small $c_2$};
    \node[circle, draw, minimum size=0.2cm, align=center] (c_3) at (-3.5,-\globalSpaceC*2){\small $c_3$};
    \node[circle, draw, minimum size=0.2cm, align=center] (c_4) at (-3.5,-\globalSpaceC*3){\small $c_4$};

    % Nodes on the right side
    \def\hS{0.33} %Half space

    % w_1
    %\def\currY{1.05}  % looks nice if it's only graph
    \def\currY{0}
    \node[circle, draw, minimum size=0.2cm] (w_{x1}) at (0.2,\currY) {\small $w_1$};
    % w_2
    \pgfmathsetmacro{\currY}{\currY - \globalSpace} 
    \node[circle, draw, minimum size=0.2cm] (w_{x2}) at (0.2,\currY) {\small $w_2$};
    % w_3
    \pgfmathsetmacro{\currY}{\currY - \globalSpace} 
    \node[circle, draw, minimum size=0.2cm] (w_{x3}) at (0.2,\currY) {\small $w_3$};
    % w_4
    \pgfmathsetmacro{\currY}{\currY - \globalSpace} 
    \node[circle, draw, minimum size=0.2cm] (w_{x4}) at (0.2,\currY) {\small $w_4$};
    % w_5
    \pgfmathsetmacro{\currY}{\currY - \globalSpace} 
    \node[circle, draw, minimum size=0.2cm] (w_{x5}) at (0.2,\currY) {\small $w_5$};
    % w_6
    \pgfmathsetmacro{\currY}{\currY - \globalSpace} 
    \node[circle, draw, minimum size=0.2cm] (w_{x6}) at (0.2,\currY) {\small $w_6$};

    %Edges - Graph
    % c_1:
    \draw[color=black, dashed] (c_1) -- (w_{x1});
    \draw[color=black, dashed] (c_1) -- (w_{x2});
    \draw[color=black, dashed] (c_1) -- (w_{x3});
    % c_2:
    \draw[color=black, dashed] (c_2) -- (w_{x2});
    \draw[color=black, dashed] (c_2) -- (w_{x3});
    \draw[color=black, dashed] (c_2) -- (w_{x6});
    % c_3:
    \draw[color=black, dashed] (c_3) -- (w_{x1});
    \draw[color=black, dashed] (c_3) -- (w_{x4});
    \draw[color=black, dashed] (c_3) -- (w_{x6});
    % c_4:
    \draw[color=black, dashed] (c_4) -- (w_{x2});
    \draw[color=black, dashed] (c_4) -- (w_{x4});
    \draw[color=black, dashed] (c_4) -- (w_{x5});

    % OK.
\end{tikzpicture}
%\\ Consistency graph %(a)
\vspace{0.3cm}  % -----
\end{tabular}
\begin{tabular}{@{}cc@{}cc@{}}
%\hspace{0.02cm}
\begin{tikzpicture}[scale=0.75, transform shape]  % --- Matching: Eager
    \def\globalSpaceC{1}
    \def\globalSpace{1.0}
    % Concepts on the left side
    \node[circle, draw, minimum size=0.2cm, align=center] (c_1) at (-1.4,0){\small $c_1$};
    \node[circle, draw, minimum size=0.2cm, align=center] (c_2) at (-1.4,-\globalSpaceC){\small $c_2$};
    \node[circle, draw, minimum size=0.2cm, align=center] (c_3) at (-1.4,-\globalSpaceC*2){\small $c_3$};
    \node[circle, draw, minimum size=0.2cm, align=center] (c_4) at (-1.4,-\globalSpaceC*3){\small $c_4$};

    % Nodes on the right side
    \def\hS{0.33} %Half space

    % w_1
    \def\currY{0}
    \node[circle, draw, minimum size=0.2cm] (w_{x1}) at (0.2,\currY) {\small $w_1$};
    % w_2
    \pgfmathsetmacro{\currY}{\currY - \globalSpace} 
    \node[circle, draw, minimum size=0.2cm] (w_{x2}) at (0.2,\currY) {\small $w_2$};
    % w_3
    \pgfmathsetmacro{\currY}{\currY - \globalSpace} 
    \node[circle, draw, minimum size=0.2cm] (w_{x3}) at (0.2,\currY) {\small $w_3$};
    % w_4
    \pgfmathsetmacro{\currY}{\currY - \globalSpace} 
    \node[circle, draw, minimum size=0.2cm] (w_{x4}) at (0.2,\currY) {\small $w_4$};
    % w_5
    \pgfmathsetmacro{\currY}{\currY - \globalSpace} 
    \node[circle, draw, minimum size=0.2cm] (w_{x5}) at (0.2,\currY) {\small $w_5$};
    % w_6
    \pgfmathsetmacro{\currY}{\currY - \globalSpace} 
    \node[circle, draw, minimum size=0.2cm] (w_{x6}) at (0.2,\currY) {\small $w_6$};

    %Edges - Eager
    \draw[color=green] (c_1) -- (w_{x1});
    \draw[color=green] (c_2) -- (w_{x6});
    \draw[color=green] (c_3) -- (w_{x4});
    \draw[color=green] (c_4) -- (w_{x5});

    % OK.
\end{tikzpicture}
&
\hspace{0.7cm}
\begin{tikzpicture}[scale=0.75, transform shape]  % --- Matching: Greedy
    \def\globalSpaceC{1}
    \def\globalSpace{1.0}
    % Concepts on the left side
    \node[circle, draw, minimum size=0.2cm, align=center] (c_1) at (-1.4,0){\small $c_1$};
    \node[circle, draw, minimum size=0.2cm, align=center] (c_2) at (-1.4,-\globalSpaceC){\small $c_2$};
    \node[circle, draw, minimum size=0.2cm, align=center] (c_3) at (-1.4,-\globalSpaceC*2){\small $c_3$};
    \node[circle, draw, minimum size=0.2cm, align=center] (c_4) at (-1.4,-\globalSpaceC*3){\small $c_4$};

    % Nodes on the right side
    \def\hS{0.33} %Half space

    % w_1
    \def\currY{0}
    \node[circle, draw, minimum size=0.2cm] (w_{x1}) at (0.2,\currY) {\small $w_1$};
    % w_2
    \pgfmathsetmacro{\currY}{\currY - \globalSpace} 
    \node[circle, draw, minimum size=0.2cm] (w_{x2}) at (0.2,\currY) {\small $w_2$};
    % w_3
    \pgfmathsetmacro{\currY}{\currY - \globalSpace} 
    \node[circle, draw, minimum size=0.2cm] (w_{x3}) at (0.2,\currY) {\small $w_3$};
    % w_4
    \pgfmathsetmacro{\currY}{\currY - \globalSpace} 
    \node[circle, draw, minimum size=0.2cm] (w_{x4}) at (0.2,\currY) {\small $w_4$};
    % w_5
    \pgfmathsetmacro{\currY}{\currY - \globalSpace} 
    \node[circle, draw, minimum size=0.2cm] (w_{x5}) at (0.2,\currY) {\small $w_5$};
    % w_6
    \pgfmathsetmacro{\currY}{\currY - \globalSpace} 
    \node[circle, draw, minimum size=0.2cm] (w_{x6}) at (0.2,\currY) {\small $w_6$};

    %Edges - Greedy
    \draw[color=red] (c_1) -- (w_{x1});
    \draw[color=red] (c_2) -- (w_{x2});
    \draw[color=red] (c_3) -- (w_{x4});
    \draw[color=red] (c_4) -- (w_{x5});

    % OK.
\end{tikzpicture}
&
\hspace{1.1cm}
\begin{tikzpicture}[scale=0.75, transform shape]  % --- Matching: Optimal	
    \def\globalSpaceC{1}
    \def\globalSpace{1.0}
    % Concepts on the left side
    \node[circle, draw, minimum size=0.2cm, align=center] (c_1) at (-1.4,0){\small $c_1$};
    \node[circle, draw, minimum size=0.2cm, align=center] (c_2) at (-1.4,-\globalSpaceC){\small $c_2$};
    \node[circle, draw, minimum size=0.2cm, align=center] (c_3) at (-1.4,-\globalSpaceC*2){\small $c_3$};
    \node[circle, draw, minimum size=0.2cm, align=center] (c_4) at (-1.4,-\globalSpaceC*3){\small $c_4$};

    % Nodes on the right side
    \def\hS{0.33} %Half space

    % w_1
    \def\currY{0}
    \node[circle, draw, minimum size=0.2cm] (w_{x1}) at (0.2,\currY) {\small $w_1$};
    % w_2
    \pgfmathsetmacro{\currY}{\currY - \globalSpace} 
    \node[circle, draw, minimum size=0.2cm] (w_{x2}) at (0.2,\currY) {\small $w_2$};
    % w_3
    \pgfmathsetmacro{\currY}{\currY - \globalSpace} 
    \node[circle, draw, minimum size=0.2cm] (w_{x3}) at (0.2,\currY) {\small $w_3$};
    % w_4
    \pgfmathsetmacro{\currY}{\currY - \globalSpace} 
    \node[circle, draw, minimum size=0.2cm] (w_{x4}) at (0.2,\currY) {\small $w_4$};
    % w_5
    \pgfmathsetmacro{\currY}{\currY - \globalSpace} 
    \node[circle, draw, minimum size=0.2cm] (w_{x5}) at (0.2,\currY) {\small $w_5$};
    % w_6
    \pgfmathsetmacro{\currY}{\currY - \globalSpace} 
    \node[circle, draw, minimum size=0.2cm] (w_{x6}) at (0.2,\currY) {\small $w_6$};

    %Edges - Optimal
    \draw[color=blue] (c_1) -- (w_{x1});
    \draw[color=blue] (c_2) -- (w_{x3});
    \draw[color=blue] (c_3) -- (w_{x4});
    \draw[color=blue] (c_4) -- (w_{x2});

    % OK.
\end{tikzpicture}
\\ Eager %(b) 
& Greedy %(c) 
& Optimal-1 %(d)
\end{tabular}
\end{center}
  \vspace{-0.05in}
  \caption{Simplest possible ordered consistency graph ($\precdot_R$ and $\precdot_W$ given by the indices) %(a) 
  showing that Eager, %(b)
   Greedy %(c)
  and Optimal-1 %(d)
  can require different maximum teaching sizes (witnesses of size 6, 5 and 4 respectively), assuming witness $w_i$ has size $i$.}
  \label{fig-example_of_eager_vs_greedy_vs_optimal}
  \vspace{-0.05in}
\end{figure}
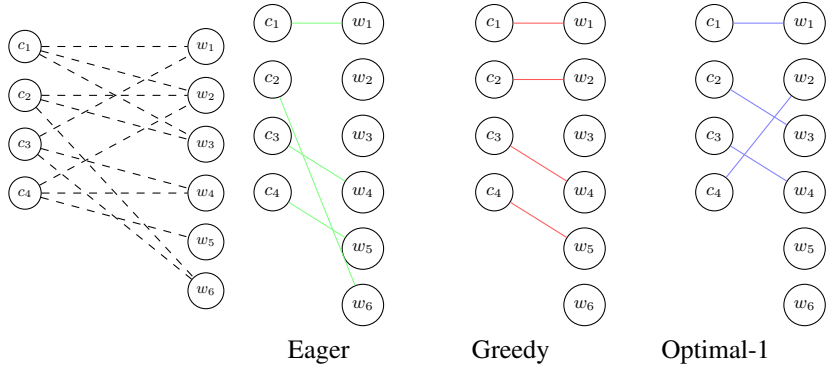

% -----

When %\snJAT{Show Greedy Eager far apart?}
we allow the size of ground elements to vary, we can show the following.

\begin{theorem}
    When there is a bound on the number of witnesses of any size, then for any $t$, there exists a size ordered consistency graph where Greedy uses a witness of size $G_{max}$ while \opta will not use a witness of size larger than $O_{max}$ and where $G_{max} - O_{max} \geq t$.
\end{theorem}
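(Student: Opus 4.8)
The plan is to exhibit a single explicit ordered consistency graph, amplifying the blocking phenomenon already visible in Figure~\ref{fig-example_of_eager_vs_greedy_vs_optimal}: Greedy ``wastes'' a small witness on an early representation that has an alternative, thereby starving a later representation whose only remaining consistent option is a huge witness, while \opta simply reroutes the early representation to its alternative and keeps every witness small. Since we only ever need a handful of witnesses of pairwise distinct sizes, the hypothesis that each size hosts a bounded number of witnesses is respected throughout (indeed with bound $1$), so the point is precisely that boundedness does not save Greedy.

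Concretely, I would take two representations $r_1 \precdot_R r_2$ and three witnesses $w_1, w_2, w_B$ of sizes $1$, $2$ and $t+2$ respectively (so $w_1 \precdot_W w_2 \precdot_W w_B$), with edges $r_1 w_1,\, r_1 w_2,\, r_2 w_1,\, r_2 w_B$. The verification splits into two short computations. For Greedy, processing witnesses in $\precdot_W$-order, $w_1$ is assigned to its earliest consistent representation $r_1$; then $w_2$ has no unmatched consistent representation and is dropped; finally $w_B$ is forced onto $r_2$, so $G_{max}=t+2$ (equivalently, in the representation-ordered formulation, which coincides by the theorem of this section equating Greedy's two formulations, $r_1$ grabs its smallest witness $w_1$ and $r_2$ is left only with $w_B$). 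For \opta, I would invoke the matching-saturation characterization: the induced subgraph on witnesses of size at most $1$ contains only $w_1$ and cannot saturate two representations, whereas on witnesses of size at most $2$ the matching $\{r_1 w_2,\, r_2 w_1\}$ saturates $R$; hence $O_{max}=2$ and $G_{max}-O_{max}=t$.

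The delicate point is not the arithmetic but making sure the bounded-witnesses-per-size hypothesis cannot be exploited to rescue Greedy. This is handled by the consistency pattern of the ``victim'' $r_2$: it is adjacent to no witness of intermediate size, only to $w_1$ and to its unique large fallback $w_B$, so even if the ambient language is forced to contain witnesses of every size $3,\dots,t+1$, they are irrelevant to $r_2$ and Greedy must still fall back to $w_B$. If one insists that such intermediate witnesses actually occur (to satisfy the hypothesis literally in a dense-size setting), I would attach each of them to a fresh dummy representation placed after $r_2$ in $\precdot_R$; these dummies change neither Greedy's early choices nor the value of $O_{max}$, so the argument is unaffected. The only remaining care is to certify minimality of $O_{max}$ through the binary-search/matching criterion already stated for \opta, and to observe that $t$ enters solely through the size of the fallback witness $w_B$, so the same construction yields an arbitrarily large gap for every $t$.
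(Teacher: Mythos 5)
Your core three-witness construction captures exactly the same blocking phenomenon as the paper's proof: Greedy spends the smallest witness on the earlier representation $r_1$, starving $r_2$, whose only fallback is the huge witness $w_B$, while \opta reroutes $r_1$ to $w_2$ and matches $r_2$ to $w_1$, and your verification of $G_{max}=t+2$, $O_{max}=2$ is correct. Under a literal reading of the statement this suffices, since your graph trivially has at most one witness per size. But note that this reading makes the hypothesis vacuous, and the paper's proof is visibly aimed at the nontrivial setting: it fixes an arbitrary bound $s$ on the number of witnesses of \emph{each} size, places the victim's fallback $st$ indices later in $\precdot_W$, and deduces $s(w_{st+k})-s(w_k)\geq t$ from the multiplicity bound; the filler representations $c_3,\ldots,c_k$, each consistent with a window of $st+1$ consecutive witnesses, let the intermediate witnesses exist without \opta ever needing them, since every $c_i$ also has the small witness $w_i$ available.

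This is precisely where your patch fails, and the failure is concrete. If the intermediate sizes $3,\ldots,t+1$ must be populated and you attach each such witness to a fresh dummy representation whose \emph{only} neighbor it is, then \opta --- which by definition must produce a matching saturating all of $R$, dummies included --- is forced to assign every dummy its unique witness. Hence $O_{max}$ becomes $t+1$, not $2$, and the gap collapses to $(t+2)-(t+1)=1$; your claim that the dummies ``change neither Greedy's early choices nor the value of $O_{max}$'' is false in its second half. The repair is the move the paper makes: the representations absorbing the intermediate witnesses must themselves retain small consistent witnesses (e.g., attach all intermediate witnesses to $r_1$, or use overlapping windows as with the paper's $c_i$ consistent with $w_i,\ldots,w_{i+st}$), so that Greedy still consumes or drops everything between $w_2$ and $w_B$ while \opta serves every representation with a witness of small size. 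As written, the one step of your argument that engages the theorem's actual hypothesis is the step that breaks.
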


\begin{proof}
    We prove this statement by construction, for a graph where we have only one representation for each concept. Assume there are at most $s$ witnesses of any size. 
    Let concept ordering be $c_1,c_2,...c_k$, and let  witness ordering be $w_1, w_2, w_3, ...w_{st+k}, ...$ with $s(w_i) \leq s(w_{i+1})$, for any choice of $k$.
    Let $c_1$ be consistent with $w_1$ and $w_2$, and let $c_2$ be consistent with $w_1$ and $w_{st+k}$ (but not consistent with any witness in between, which could happen if e.g. there are that many ground elements) and let $c_i$ for $3 \leq i \leq k$ be consistent with $w_i,w_{i+1},...,w_{i+st}$.
The Optimal matching will assign $c_1$ to $w_2$ and $c_2$ to $w_1$ and $c_i$ to $w_i$ for any $3 \leq i \leq k$, thus using maximum size witness with index $k$. Greedy will assign almost the same matching except it starts with $c_1$ to $w_1$ and then $c_2$ to $w_{st+k}$ since this is the only one available, thus using maximum size witness with index $st+k$. As we have at most $s$ witnesses of each size, we have that $s(w_{st+k}) - s(w_k) \geq t$.
    % By the order $w_i \le w_{i+1}$ and $c_i \le c_{i+1}$, greedy will find the matching $M_{Greedy}=\{c_1: w_1 , c_2:w_{t+2}, c:w_3\}$, and optimal will find $M_{opt}=\{c_1: w_2 , c_2:w_1, c_3:w_3\}$.
\end{proof}

Finally, we would like to give lower bounds on the performance of Greedy, in the setting where size of witnesses is simply cardinality, i.e. the traditional teaching dimension. 
Let us consider two extremes, one where we have many representations of the same single concept (i.e. $|C|=1$) and another where each concept has a single representation (i.e. $R=C$). 

In the former case, it is easy to see that Greedy will perform as well as \opta, since Greedy goes through witnesses by increasing cardinality and assigns them to the next representation (all of the same single concept) if consistent. Thus if Greedy assigns a witness of size $q$ then so must \opta.

In the latter case, with only one representation per concept, we  argue that if Greedy assigns a witness of cardinality $q$ to some concept $c$ then this implies a lower bound on $|C|$ given as follows. 
Let us start by asking, why was $c$ not taught by a smaller witness?  Assuming there are $|X|=n$ examples, then any subset $Q \subseteq X$ of size $q-1$ when labelled consistent with $r$ has already been tried by Greedy, and hence some other concept must already have been assigned to any such $Q$, and all these concepts are distinct. This means we must have taught ${n \choose q-1}=k$ other concepts already. But then we have already taught at least $k+1$ concepts and we can again ask why were any of these not taught by a smaller witness of size $q-2$? It must be that any such witnesses (labelled to be consistent with some concept among the k+1 we already have) must have been used to teach other, again distinct, concepts. 

Consider the binary matrix $M$ where concepts are rows and columns are given by $X$ and $M(c,x)=1$ if and only if $c$ consistent with $x$ labelled 1.
Note that, to verify how many distinct witnesses
exists, corresponding to new concepts, that are labelled consistently with one of these $k + 1$ concepts, one must
sum up the number of distinct rows when projecting on $q-2$ columns, for all
choices of these columns. Note that the number of distinct rows, i.e witnesses and hence number of concepts, when projecting on $q-2$ columns,
for all choices of these columns, depends on the matrix $M$ we do the projection on. 
Thus, we need
to find the matrix $M$ minimizing the sum of unique rows after doing the projection.
%Note that, to verify how many distinct witnesses exists, that are labelled consistently with one of these $k+1$ concepts, then when viewed as a matrix one must sum up the number of distinct rows when projecting on $q-2$ columns, for all choices of these columns. 
To achieve a lower bound on the size of $C$ when Greedy uses a witness of size $q$, we thus arrive at the following combinatorial question. What is the binary matrix $M$ on $k$ distinct rows and $n$ columns that would give the smallest sum when projecting on $q$ columns? 

We have not been able to answer this question, and until we answer it we are not able to carry out the final steps of the lower bound argument.
However, we conjecture that it is achieved by the matrix consisting of the $k$ rows corresponding to the binary representations of the numbers between zero and $k-1$, with leading 0s to give them length $n$.

% -

% ----------
\section{Experiments }
\label{sec-exper}

In this section we  observe quantitatively how each teaching protocol behaves in a variety of languages with various types of redundancy.
Table~\ref{table-small-results-overview_graph} summarizes the key features. 
%of each specific language in terms of number of representations, concepts and witnesses, size of the witnesses, density, the redundancy and redundancy spread metrics.
%We first  describe the experimental setup: the metrics and the languages.
% ---
%\subsection{Metrics}
%\label{sec-exper-setup-metrics}
Let us first describe the metrics used, before describing the languages. 
%, namely Boolean expressions and the universal language P3.
%We aim to observe that the results depend on the redundancy level of the language. 
To estimate how redundant a language is, we define \emph{Redundancy} as $1-Uniqueness$, where $Uniqueness$ is computed as the average for all concepts of $1/{|R_{c}|}$, with ${R_{c}}$ the set of all equivalent representations for concept $c$. %... and \emph{nearness}... where $R_{c}$\snJHO{Removed the bar, because then this made things confusing when we apply the cardinality (two more bars)} are all the representations in $R$ that express $c$...
Formally:
\begin{equation}
%\begin{align*}
   %\nonumber
   \mathrm{Redundancy} = 1-{\frac{1}{|C|}\sum_{c \in C} {\frac{1}{|R_{c}|}}}~. %\\ 
   %Nearness = {\sum_{c \in Con}
   %  {\sum_{r_1, r_2 \in {Rep|_{c}, r_1 \neq r_2}}{
   %    \frac{dist(r_1, r_2)}{Rep|_{c}\choose 2}
   %    }
   %  }
   %} %\todo{\over{|Con|} ~.
   \label{eq-redundancy}
%\end{align*}    
\end{equation}

From the result of Theorem~\ref{thm-concent} we suspect that Eager performs well compared to Greedy when, for each witness $w$, if we look at the set $R^{i}_w$ of the $i$ earliest representations consistent with $w$, then there are few different concepts in $R^{i}_w$.  
Assuming the $\precdot_W$ ordering is $w_1,w_2,...$ we define \emph{Redundancy Spread} as the average number of different concepts in $R^i_{w_i}$. We use 
%$\{ [r^{j}_{w_i}] \mid j \in [1 \ldots i]\}$ 
$R^i_{w_i}$ for $w_i$ because $\leq i$ values of $T:R \rightarrow W$ have been defined when Greedy or Eager consider $w_i$. 
Formally, if for each witness $w_i$ we let $r^j_{w_i}$ be the $j$-th representation consistent with $w_i$, and recalling that $[r]$ is the concept equivalence class $r$ belongs to, the definition of Redundancy Spread becomes:
%Thus we compute the average Redundancy Spread as the average number of unique concepts a witness set sees among its relative first representations.
\begin{align}
 \label{eq-spread}  % (!) \label here because of \\ and \nonumber below (and align, not align*)
%\text{Redundancy Spread} = \\
 \frac{1}{|W|} \sum_{w_i\in W} \left | \{ [r^{j}_{w_i}] : j \in [1 \ldots i] \right\}| \nonumber
\end{align}

%\snJHO{NOTE: Brigt moved this because some of it is still applicable, but note that Jose did not write it for this part. I have changed all this, since I think the decay is too strong, and 10 is a very low number to look for equivalent representations, and will be less appropriate as representations become larger. Actually, I think a metric that looks at the length of the representation and not the order might  be more appropriate. I'm not suggesting to change this at this point}
%\snJAT{I removed Jose's comment, it can be found in .tex. Please insert if still relevant.}
%We also define Density of the bipartite consistency graph $G_R$ on vertex set $R \cup W$ as the percentage of the graph over all possible edges, i.e. as $100*\frac{|E(G_R)|}{|R|\cdot|W|}\%$.

% ---
\subsection{Boolean Expressions}
\label{sec-exper-setup-boolean}

We experiment with Boolean expressions, since in this domain it is possible to decide whether two representations belong to the same equivalence class, i.e., correspond to the same concept.  Furthermore, in this language, we are able to measure the degree of redundancy present in the set of representations. In all our experiments, we consider the ordered alphabet of 3 Boolean variables $\{a,b,c\}$ and every representation in $R$ is expressed in DNF (Disjunctive Normal Form), i.e. OR of AND-terms. Every variable occurs at most once in a term, possibly preceded by the negation operator $\neg$.
%We consider the ordered alphabet of Boolean variables $A = \{a, b, c, ...\}$, and every ground term involves the first $N$ variables of the alphabet, denoted by the ordered sequence $A_N$. Every variable occurs at most one time, possibly preceded by the negation operator $\neg$. 
For example, $(a \wedge b \wedge \neg c)$ is a valid ground term.% but $(a \wedge b \wedge \neg c \wedge b)$ is not since $b$ occurs more than once. 
%Every representation is a disjunction of up to $M$ terms.
%Across all experiments, $N=3$ and $M=8$. 

Let us first describe the witnesses, which are similar across all experiments. 
Each example is a pair consisting of a binary string of length three (the input) and a bit (the output), such as $(001,1)$ indicating that when $a=0,b=0,c=1$ the value of the function should be $1$. There are thus 16 witnesses of cardinality one, and $16*14/2=112$ witnesses of cardinality two, as we do not allow for self-incongruent witnesses and the order of examples is irrelevant. For these experiments we use witnesses of cardinality up to $5$, and only positive examples, which implies $|W|=3,448$.  

$\precdot_W$: The witnesses in the collection $W$ are ordered by their increasing teaching size. The size function $s : W \rightarrow \mathbb{N}$ is defined for a witness $w$ as the number of examples in $w$ times 4 (the number of bits) plus the number of 1s in the inputs of the examples. This scheme would be useful for a situation with many variables, where it could be more efficient to just identify the position of the 1s in a long binary string. 
%\snJAT{Jose, can you argue that this size function makes sense}

%the total number of truth values assigned in the input and the output coordinates of each pair in $w$ plus the total number of truth values assigned to $\top$ in the input coordinate of each pair in $w$. Two witnesses of same size are further ordered (i) by the increasing cardinality $|w|$,  and if needed, then also (ii) by the lexicographic order, induced by the ordered alphabet $\{\bot,\top\}$ %\snJHO{Here we're using 0s and 1s, and before we were using $\{\bot,\top\}$, etc. It's the same thing, but we have to choose.}, 
%over the string resulting of concatenating the expressions of the input and output coordinates of every pair in a witness.

In the first experiment, \emph{3-DNF}, we build a scenario without redundancy. We consider all 256 Boolean functions on three variables as the set $R = C$, as follows. A Boolean function $\phi$ on $a,b,c$ is uniquely defined by a truth table that gives the truth values for the 8 possible truth assignments to the three variables. For any assignment  $(v_a,v_b,v_c)$ to $(a,b,c)$ where $\phi(v_a,v_b,v_c)=1$ we include the corresponding term in the representation $r_{\phi}$ of $\phi$ in $R$. For instance, if $\phi$ is logically equivalent to ``$a$ and $b$'' then this is represented as $r_{\phi}=(a\wedge b\wedge c) \vee (a\wedge b\wedge \neg c)$.

%for   we consider the set of all possible unique representations, i.e., without redundancies, each of $N=3$ variables, with every variable occurring \emph{exactly once} per term. An example of a representation in $R$ in this setting is $(a \wedge b \wedge \neg c) \vee (a \wedge \neg b \wedge c) \vee (a \wedge b \wedge c)$. 
%Formally, $R = \{\bigvee_{j=1}^{M} term_j: term_j = \bigwedge_{i=1}^{N} x_i, ~x_i \in \{v_i, \neg{v_i}\}, ~v_i \in A_N : term_{j_k} \neq term_{j_l}, ~j_k \neq j_l, \text{~for~} j_k, ~j_l \in {1, .., M}\}$. 

In our second experiment, \emph{3-term DNF}, we introduce redundancy in $R$, and do this in a way that will shrink the concept space to have smaller cardinality $|C|=246$. Every representation is now a disjunction of up to $3$ ground terms, and each term contains one, two or three distinct variables, each possibly negated. For a single term we have six representations of one variable, $12=6*4/2$ representations of two variables, and $8$ representations for three variables, thus $26$ in total. For two terms we have $26*25/2=325$ representations, and for three terms we have $26*25*24/3!=2,600$.
We also have a single representation (False) with no terms, and thus $|R|=1+26+325+2,600=2,952$. 
%Note in Table~\ref{table-small-results-overview_graph} that for \emph{3-term DNF} the  Spread value is  high, showing that the redundant representations are on average far apart. 

%In each term, however, a variable occurs \emph{at most} once and their order is not relevant.  % (a variable not occurring in the expression is equivalent to being assigned to $\top$ and absorbed as neutral by $\wedge$). 
%For example, $(b \wedge \neg c)$ is here a valid ground term and evaluates to $\top$ in any Boolean input that assigns $\top$ to $b$ and $\bot$ to $c$, regardless of the assignment for $a$. And since the order of variables is irrelevant, when $(b \wedge \neg c) \in R$ then $(\neg c\wedge b) \notin R$.  
%Formally, $R = 3termDNF(A_N) := \{\bigvee_{j=1}^{M} term_j: term_j \in Ground_{A_N} : term_{j_k} \neq term_{j_l}, j_k \neq j_l, \text{~for~} j_k, j_l \in {1, .., M}\}$, with $Ground_{Vars} = \{\bigwedge_{i=1}^{N} x_i, ~x_i \in \{\top, v_i, \neg{v_i}\}, ~v_i \in Vars\}$ and $Vars$ a collection of variables possibly ordered.

For the third experiment, \emph{3-term DNF with permutations}, we want to increase Redundancy %but also to have redundant copies closer to the earliest representation for any concept, thus 
and decrease Redundancy Spread. The set of representations is similar to 3-term DNF, but here we allow for any permutation of variables within each term.
%in $A_N$, i.e., $R = 3termDNF(Set(A_N))$, with $Set(A_N)$ the first $N$ variables of the alphabet $A$ without order. 
For example, if $(b \wedge \neg c) \in R$, also $(\neg c\wedge b) \in R$. This drastically increases the cardinality of $R$, while Redundancy also increases substantially and Redundancy Spread decreases.
This third experiment is carried out also against a smaller witness set, with witnesses of cardinality 5 only, to see the effects of a large decrease in Redundancy Spread.

Finally, the last Boolean experiment, \emph{3-term DNF with permutations and duplicates}, is similar to the previous one, 3-term DNF with permutations, but here each representation occurs consecutively twice in the ordered collection $R$, for yet another variation of redundancy.

$\precdot_R$: Representations in $R$ are ordered by increasing size $s(r)$ defined as the number of literals in $r$ plus the number of negated variables plus the number of disjunction symbols; e.g.,
$s((a\wedge b\wedge c) \vee (a\wedge b\wedge \neg c))=8$. 
%$s : R \rightarrow \mathbb{N}$ defined as the total number of occurrences of symbols from $Symbols_N$ in a representation, $Symbols_N = A_N \cup \{\neg, \vee\}$. 
%Two representations of same size are further ordered (i) by the increasing number of terms,  and (ii) by the lexicographic order induced by $\langle a, b, c, \neg, \vee \rangle$.
Two representations of same size are further ordered by the increasing number of terms, and by the $\langle a, b, c, \neg, \vee \rangle$-induced lexicographic order.

\begin{table*}[]
%    \vspace{-0.07in}
    \centering
    \begin{tabular}{|l|r|r|c|r|c|c|}
    \hline
        % Concept class & $|R|$ & $|C|$ & $|R|/|C|$ & Red. & $|w|$ & $|W|$ & AvgDeg($w$) & AvgDeg($r$)  \\ \hline
        \multicolumn{1}{|p{0.26\linewidth}|}{\centering \small \multirow{2}{6em}{Domain}}
        & \multicolumn{1}{|p{0.05\linewidth}|}{\centering \multirow{2}{1em}{$|R|$}}
        & \multicolumn{1}{|p{0.03\linewidth}|}{\centering \small \multirow{2}{1em}{$|C|$}}
    %    & \multicolumn{1}{|p{0.05\linewidth}|}{\centering \small \multirow{2}{2em}{$\frac{|Rep|}{|Con|}$}}
  %      & \multicolumn{1}{|p{0.05\linewidth}|}{\centering \small \multirow{2}{2em}{Red.}}
        & \multicolumn{1}{|p{0.07\linewidth}|}{\centering \multirow{2}{4em}{Witness}}
        & \multicolumn{1}{|p{0.05\linewidth}|}{\centering \small \multirow{2}{1em}{$|W|$}}
     %   & \multicolumn{1}{|p{0.05\linewidth}|}{\centering \multirow{2}{2em}{Density}}
        & \multicolumn{1}{|p{0.1\linewidth}|}{\centering \small \multirow{2}{6em}{Redundancy}}
        & \multicolumn{1}{|p{0.1\linewidth}|}{\centering \small \multirow{2}{6em}{  Redundancy \\  \centering Spread}}
        \\ 
         & & & & & & \\
        \hline
        %\hhline{|=|=|=|=|=|=|=|=|=|=|} 
        % ---
        3-DNF & 256 & 256 
        %&  {\centering 1}
        %& {\centering xx}
        & {\centering $|w|\leq5$ } & 3,488 %&  {\centering $6.25\%$} 
        & {\centering 0} & {\centering 15.13} \\
        %\hline
        % & & & & & & & \\
        \hline
        % ---
        3-Term DNF & 2,952 & 246 
        %& {\centering 11.95} 
        %& {\centering xx}
        &  {\centering $|w|\leq5$ } & 3,488 %& {\centering $6.25\%$} 
        & {\centering 0.727} & {\centering 13.28} \\
        % & & & &  & & & \\
        \hline
        % ---         
        \multirow{2}{8em}{3-Term DNF with permutations} & \multirow{2}{2.8em}{79,158} & \multirow{2}{1.5em}{246}   
        %&  {\centering 321.78} 
        %& {\centering xx}
        & $|w| \leq 5$ & 3,488 
        %& $6.25\%$ 
        & 0.9896 & 11.41 \\ 
        \hhline{|~|~|~|-|-|-|-|} 
        &  &  & $|w|=5$ & 1,792 
        %&$3.12\%$ 
        & 0.9896 & 7.04\\
        \hline
        % ---
         \multirow{2}{12em}{3-Term DNF with permuts. and duplicates} & \multirow{2}{3.3em}{158,316}   & \multirow{2}{1.5em}{246}   %&  \multirow{2}{4em}{\centering{643.56}} 
         %& {\centering xx}
         & \multirow{2}{*}{\small{$|w|\leq5$}} & \multirow{2}{2.2em}{3,488} & %\multirow{2}{4em}{\centering{$6.25\%$}} & 
         \multirow{2}{4em}{\centering{0.9948}} & \multirow{2}{4em}{\centering{10.42}}\\
         & & & & &  & \\
         % & & & & & & & \\
        \hline
        % ---
        P3 & $1.9*10^9$% 1,895,293,413
        %& ?  & ?
        & {\centering{?}} & $s(w) \leq 6$   % we set $k$=6 in Greedy and we end up with $|w|$\leq$4$
        & 6,548 & ? 
        %& ? 
        & ?\\ \hline
        % ---
        small-P3 & 1,267
        & 106*  
        %& 1,267         & xx
        &$s(w) \leq 4$   % we set $k$=6 in Greedy and we end up with $|w|$\leq$4$
        & 260 
        %& $8.85\%$ 
        & 0.331 & 2.97\\ \hline
    \end{tabular}
    \caption{
    Domain features: 
    %$|R|$ is the number of representations; 
    %$|C|$ the number of concepts; 
    `Witness' is the size of each witness.
    %$|W|$ the number of witnesses;   Redundancy (Eq.~\ref{eq-redundancy}) and Redundancy Spread. % (Eq.~\ref{eq-spread}). 
    For P3 we do not know $G_R$ nor $C$.
    *For small-P3 we do not know $|C|$ so we show  $|R/W|$. }
    \label{table-small-results-overview_graph}

    \vspace{0.25in}
%    \vspace{-0.07in}
    \centering
    \begin{tabular}{|l|c|l|r|r|r|r|}
    \hline
        % Algorithm & $|w|$ & $\#$Rep. taught & $\#$Con. taught & Max. $s(w)$ & Witness index \\ \hline 
        \multicolumn{1}{|p{0.22\linewidth}|}{\centering \small \multirow{2}{2em}{Domain}} 
        & \multicolumn{1}{|p{0.07\linewidth}|}{\centering \multirow{2}{3em}{Witness}}
        & \multicolumn{1}{|p{0.14\linewidth}|}{\centering \small \multirow{2}{5em}{Algorithm}}
        & \multicolumn{1}{|p{0.08\linewidth}|}{\centering \small Reps. taught}
        & \multicolumn{1}{|p{0.08\linewidth}|}{\centering \small Concepts taught}
        & \multicolumn{1}{|p{0.07\linewidth}|}{\centering \small Max. $\{s(w)\}$}
        & \multicolumn{1}{|p{0.08\linewidth}|}{\centering \small Max. $\{i(w)\}$}
        \\ 
        \hline
        \hhline{|=|=|=|=|=|=|=|}
        \hline
        % --- 3DNF
        \multirow{3}{4em}{3-DNF}&\multirow{3}{*}{\centering{  $|w|\leq5$}}
         % ---
         & \eager & 219 & 219 & 30 & 3,488 \\  
         \hhline{|~|~|-|-|-|-|-|}  
        % ---
        && \greedy & 256 & 256 & 16 & 328 \\            
        \hhline{|~|~|-|-|-|-|-|}
        % ---
        && \opta & 256 & 256 & 16 & 256 \\
        %\hline
        \hhline{|=|=|=|=|=|=|=|}
        % --- 3-Term DNF
        \multirow{3}{6em}{3-Term DNF} &\multirow{3}{*}{\centering{  $|w|\leq5$}}
          % ---
        & \eager  & 170 & 170 & 30 & 3,466 \\
        \hhline{|~|~|-|-|-|-|-|}
        % ---
        && \greedy & 2,895 & 246 & 30 & 3,481 \\
        \hhline{|~|~|-|-|-|-|-|}        
        % ---
        && \opta & 2,952 & 246 & 28 & 2,952 \\
        %\hline
        \hhline{|=|=|=|=|=|=|=|} % double line 

        % --- 3-Term DNF Perm
        \multirow{6}{10em}{3-Term DNF \\with permutations}
        & \multirow{3}{*}{\centering{$|w|\leq5$}}
        % --
        & \eager & 170 & 170 & 30 & 3,466 \\ \hhline{|~|~|-|-|-|-|-|} 
        % ---
        &&\greedy & 3,488 & 189 & 30 & 3,488 \\ \hhline{|~|~|-|-|-|-|-|} 
         
        % --
        &&\optb & 3,488 & 246 & 30 & 3,488 \\ %\hhline{|~|-|-|-|-|-|-|} 
        \hhline{|~|=|=|=|=|=|=|}
        & \multirow{3}{*}{\centering{$|w|=5$}}
        & \eager& 170 & 170 & 30 & 1,770 \\ \hhline{|~|~|-|-|-|-|-|} 
        &&\greedy & 1,792 & 177 & 30 & 1,792 \\ \hhline{|~|~|-|-|-|-|-|} 
        &  &\optb &1,792 & 246 & 30 & 1,792 \\ 
        %\hline
        \hhline{|=|=|=|=|=|=|=|}
        %-- 3-Term DNF w/ permutations and duplicates
        \multirow{3}{10em}{3-Term DNF \\with permutations \\and duplicates}  % for two-col table
        %\multirow{3}{5em}{3-Term DNF w/ perm. +duplicates}  % for attemting single-col table
        &  \multirow{3}{*}{\centering{$|w|\leq5$}}
        & \eager & 170 & 170 & 30 & 3,466 \\
         \hhline{|~|~|-|-|-|-|-|} 
        % ---
        &&\greedy  & 3,488 & 178 & 30 & 3,488 \\
         \hhline{|~|~|-|-|-|-|-|} 
        % ---
        && \optb  & 3,488 & 246 & 30 & 3,488 \\
        \hhline{|=|=|=|=|=|=|=|} % double line
        % ---  P3
        \multirow{2}{4em}{P3} & \multirow{2}{*}{\centering{ $s(w)\leq6$}}
        &\eager & 2,032 & 2,032 & 6 & 6,512 \\ \hhline{|~|~|-|-|-|-|-|}
        && \greedy  & 6,548 &  & 6 & 6,548 \\ %\hhline{|~|~|-|-|-|-|-|}
        %& & %\opta & & & &  \\  
         %\hline
        \hhline{|=|=|=|=|=|=|=|} % double line
        % ---  small-P3
        \multirow{3}{4em}{small-P3} & \multirow{3}{*}{\centering{  $s(w)\leq4$}}
        &\eager & 53 & 53 & 4 & 202 \\ \hhline{|~|~|-|-|-|-|-|}
        && \greedy  & 225 & 65 & 4 & 260 \\ \hhline{|~|~|-|-|-|-|-|}
        && \optb & $\geq 214$* & 106 & 4 & 260 \\  
         %\hline
        \hline
        %\hhline{|=|=|=|=|=|=|=|}        
    \end{tabular}
     \caption{Overview of results for the teaching frameworks across all domains. 
     %Witness index is the max index of a used witness.
   % 'Reps. taught' is the number of taught representations;
    %with unique neighbourhoods in the consistency graph. 
  %   Concepts taught' is the number of taught concepts;
    $s(w)$ is the size of a witness $w$; 
   $i(w)$ is the index of witness $w$ in the order $\precdot_W$. * For  \optb and small-P3 `Reps. taught' not known exactly.
     }
    \label{table-small-results-domains}

    \vspace{0.25in}
    \centering
    \begin{tabular}{|l|r|r|r|}
    \hline
        %Domain & $|w|$ & $\%$ Index Lower & $\%$ Size Smaller \\ \hline  
        \multicolumn{1}{|p{0.4\linewidth}|}{\centering \multirow{2}{2em}{Domain}}
        & \multicolumn{1}{|p{0.07\linewidth}|}{\centering \multirow{2}{4em}{Witness}}
        & \multicolumn{1}{|p{0.17\linewidth}|}{\centering $\%$ Witness Index Lower}
        & \multicolumn{1}{|p{0.15\linewidth}|}{\centering $\%$ Witness Size Smaller}
        \\ \hline
        % ---
      %  \multirow{2}{4em}{3-DNF} 
         3-DNF & $|w|\leq5$ &  94.98\% & 94.06\% \\
      %   & 5 & 63.47\% & 51.6\% \\ %\hline
         \hline
        % ---
       % \multirow{2}{4em}{3-Term DNF} 
        3-term DNF & $|w|\leq5$ &  97.65\% & 97.06\% \\
         %\hline
      %   & 5 & 60.59\% & 48.82\% \\ %\hline
         \hline
        % ---
        \multirow{2}{14em}{3-Term DNF with permutations}
         & $|w|\leq5$ &  90.59\% & 88.24\% \\ %\hline
        \hhline{|~|-|-|-|} 
         & $|w|=5$  &  40.00\% & 30.00\% \\
       %  & & & \\
         \hline
        % ---
        \multirow{1}{19em}{3-Term DNF w/ permuts. and duplicates} 
         & $|w|\leq5$ &  84.12\% & 81.18\% \\
     %    & & & \\
     %    & & & \\
         \hline
        % ---
        P3 & $s(w) \leq 6$ & 13.98\% & 6.55\% \\
        \hline
        % ---
        small-P3 & $s(w) \leq 4$ & 24.53\% & 18.87\% \\
        \hline
   \end{tabular}
    \caption{Greedy beating Eager: Common concepts with earlier (lower index) and simpler (smaller size) witness.
    % \todo{check $s(w) \leq 6$} -> OK.
    }
    \label{table-small-results-greedy_vs_eager}
    \vspace{-0.07in}
\end{table*}

% --- Description starts as in original TS paper, for now
%\todo{NB, MOST OF BELOW IS FROM OLD P3 PAPER}

%We describe an experimental validation of our method for teaching a universal language, and we will derive several insights from the experiments. We work with the universal language P3, a simple language for string manipulation, that we describe below. As learning prior we take the preference order $\prec$ built on program size and complexity function $f$. We will thus be computing the teaching size $TS_{\length}^f$ of P3-concepts, i.e. concepts computable by P3 programs. We implement the teacher algorithm described in Section \ref{sec-defs}. 
% ---
\subsection{Universal Language}
\label{sec-exper-setup-p3}

We also analyze the new teaching frameworks for the Turing-complete language P3. %, from which we will extract multiple insights. 
P3 is a simple string manipulation language 
%(\url{https://en.wikipedia.org/wiki/P%E2%80%B2%E2%80%B2}),
%is a primitive programming language 
inspired by P", 
introduced by
Corrado B\"{o}hm  (\cite{bohm1964family}), %in 1964
the first GOTO-less imperative language proved Turing-complete, i.e., universal. The most popular variant (Brainfuck) %\footnote{\url{https://en.wikipedia.org/wiki/Brainfuck}} 
has 8 instructions in total. 
%P'' is frequently used with the addition of words of instructions in a form of 6 and two extra input/output instructions following a slightly different syntax 
We employ another version called P3, also universal and having just 7 instructions: {\tt $<>$+-[]o}. We consider P3-programs that take binary inputs and generate binary outputs.
%in the same way that were employed in~\cite{telle2019teaching}. 
A representation is a P3 program, such as the following one, which performs the left shift operation (e.g., on input 10010 we get output 00101): {\tt $>$[o$>$]$<$[$<$]$>$o}.
% For lack of space we do not explain the semantics of P3, but see, e.g., \cite{telle2019teaching}. 
Regarding the semantics of P3, we refer to \cite{telle2019teaching}. 
%\medskip
%
%$$>[o>]<[<]>o$$
%
%The program starts with the pointer at the first position of the input string and moves the pointer to the second position by the   {\tt $>$} instruction. Then, the program prints the rest of the characters of the input string using the loop  {\tt [o$>$]}. After that, the pointer is now on the `.' following the last character of the input string, so the pointer is moved one position to the left:  {\tt $<$}, and then to the first `.' on the left of the input string using the loop:  {\tt [$<$]}. Finally, the pointer is moved forward to the first character of the string and it is printed:  {\tt $>$o}.

A witness is a set of pairs of binary strings defining an input and an output, and we use only positive examples. For example, the witness $w=\{\langle 0100, 00\rangle, \langle001, \rangle, \langle00, 00\rangle \}$  is consistent with any program that on input 0100 outputs 00, on input 001 has no output and on input 00 outputs 00. 

$\precdot_W$: We use a simple encoding of example sets (the number of bits)
%defined in~\cite{telle2019teaching} 
as the size function $s(w)$, and break ties in a deterministic way. % to arrive at $\precdot_W$.
The witness $w$ above has size $s(w)= 13$.

$\precdot_R$: We use the lexicographic order on instructions {\tt $<>$+-[]o} to define the $\precdot_R$ length-lexicographic ordering of P3 programs. 

We perform two experiments, which we call P3 and small-P3. In P3 we repeat the experiment from~\cite{telle2019teaching}. Note that computing equivalence classes of 
programs is undecidable in general, and
we cannot decide if a program enters an infinite loop. We %thus use the equivalence classes $R/W$ as a proxy for the concepts and 
limit the number of timesteps for the computation of each program before breaking. In the P3 experiment, the set of representations/programs $R$ is a potentially infinite set, but Eager and Greedy end up never going beyond the $1.9*10^9$ first programs.
%\snJHO{This is not a number, with a decimal point and B? should this be 1,895 million here and in the table? If we really mean 1.895 billions.}. 
The witness set $W$ contains all sets of size at most 6.
%Since we cannot compute equivalences or the concept class for the P3 experiment, we do not have all values, nor are we able to compute any version of Optimal teaching that requires the consistency graph. 
Since we cannot compute the consistency graph and run any version of Optimal, we
conduct a second experiment, small-P3.

In small-P3 we start with all self-congruent
%\snJHO{If congruent is the same as consistent, let's use consistent.} 
%\snJAT{Congruent witnesses is something different, like not contradictory}\snJHO{Added "self-" here and everywhere to clarify.} 
witnesses of size at most 4, and the first 10,000 programs in $\precdot_R$. We filter out every witness without consistent program and every program without consistent witness.
A set of 1,267 programs is then $R$, and a set of 260 witnesses is $W$. We compute the consistency graph $G_R$ and use $R/W$ as the concept class, i.e., considering two programs equivalent if they are consistent with the exact same subset of $W$. 

\subsection{Experimental Results}
\label{sec-exper-results}

In Table~\ref{table-small-results-domains} we present an overview of selected results for all the experimental domains described in subsections~\ref{sec-exper-setup-boolean} and~\ref{sec-exper-setup-p3}. 
%Performances are reported in terms of the number of representations taught and number of concepts taught by each of the studied teaching frameworks, as well as the maximum size and maximum index of witness used. 
Table~\ref{table-small-results-greedy_vs_eager} reports on \% of cases where Greedy is able to teach concepts with earlier and simpler witnesses than those used by Eager, and Figure \ref{fig-preformance_vs_red_spread} shows that Redundancy Spread to a large degree explains this behavior.

% These are some observations in the analysis of the results presented in Tables~%\ref{table-small-results-overview_graph}, 
%\ref{table-small-results-domains} and \ref{table-small-results-greedy_vs_eager}. 

% --- Results - Boolean:

%First, we present our observations about the results for the experiments with Boolean expressions. 
As Table~\ref{table-small-results-domains} shows, for most of the Boolean experiments, Greedy manages to teach as many representations as the amount of witness sets available.  
This aligns with the suitability of Greedy for teaching in a scenario of redundant representations.
Moreover, we can observe that Greedy is always able to teach more concepts than a method like Eager that specializes in teaching unique concepts.
Since Greedy teaches many more concepts and representations, sometimes it does so by making use of witnesses that are larger or later in the order.
Yet, even in scenarios without redundancy, such as 3-DNF, where Eager might be considered more suitable, Greedy still teaches some more concepts, and does it with earlier and smaller witnesses.
 Greedy performs almost as good as \opta and \optb in terms of the number of representations taught, and the largest size and highest index of witnesses used. However, \optb (which is used rather than \opta whenever not all of $R$ can be covered) always covers a higher amount of concepts than Greedy.
The correlation between the percentage of cases where Greedy outperforms Eager and the Redundancy Spread is shown  in Figure~\ref{fig-preformance_vs_red_spread}, and confirms the latter as a strong metric for predicting these performance comparisons. A more detailed analysis can be found in Supplement. 
%\snJAT{Keep that sentence or remove it?}\snJHO{Yes, because we do not mention the appendix. Assuming we want the reviewers to know there's an appendix.}

% --- Results - P3:

Analyzing now the universal language domain, we see that
Greedy allows to teach over three times more (almost five for small-P3) programs than Eager. 
For P3 we do not have the consistency graph and thus cannot compute any version of Optimal. Quite remarkably,  in Figure \ref{fig:2scat-bits-greedy}  we see that for P3 programs the witness sets of Greedy are usually smaller than the programs they identify. A similar behavior for fewer programs was noted for Eager in \cite{telle2019teaching}.
For small-P3 we use $R/W$ as a proxy for $C$ and note that \optb is able to cover all these 106 concepts, but seemingly at the price of fewer representations (214 versus 225 for Greedy). However, note that our implementation of \optb is actually only able to optimize the number of concepts covered. The further maximization of representations that \optb requires may in fact be NP-hard, but we must leave this as an open problem.
%Moreover, Greedy is also able to teach more than Eager. 
%In a scenario like the one in our small-P3 domain, with sufficient degree of Redundancy and a high Density, Table~\ref{table-small-results-greedy_vs_eager} shows that Greedy still makes for a teaching framework that substantially outperforms Eager in terms of the proportion of programs taught by both methods where Greedy does it with earlier and simpler witnesses. 
%JAT: I removed above because I do not agree.
%As Fig.~\ref{fig-preformance_vs_red_spread} shows, the percentages of these lower and smaller index observations are again highly correlated with the very low Redundancy Spread. 
On concepts common to Eager and Greedy, we have the latter finding smaller witnesses on a relatively small percentage of the total for Small-P3 (see Figure \ref{fig-preformance_vs_red_spread}) but even more so for P3. For P3 we do not have the consistency graph, so we cannot compute Redundancy Spread, but we hypothesize that the small values of 13.98\% and 6.55\%, still in favor of Greedy, which we observe for P3, are due to an even lower Redundancy Spread %and a higher Density 
%JAT: I took away above as I do not agree
of its underlying consistency graph. 
%\snJAT{I removed Density}

%In a small subset of these programs\snJHO{Don't understand, in which small subset? Are you referring to small-P3, say so then}, Greedy actually finds an earlier witness set in the order of witnesses, 
%and in a subset of this subset\snJHO{I'm completely lost, I don't even know what Table or row I should be looking at} the witnesses assigned by Greedy are smaller.

\begin{figure}
    \centering
    \includegraphics[width=0.9\linewidth]{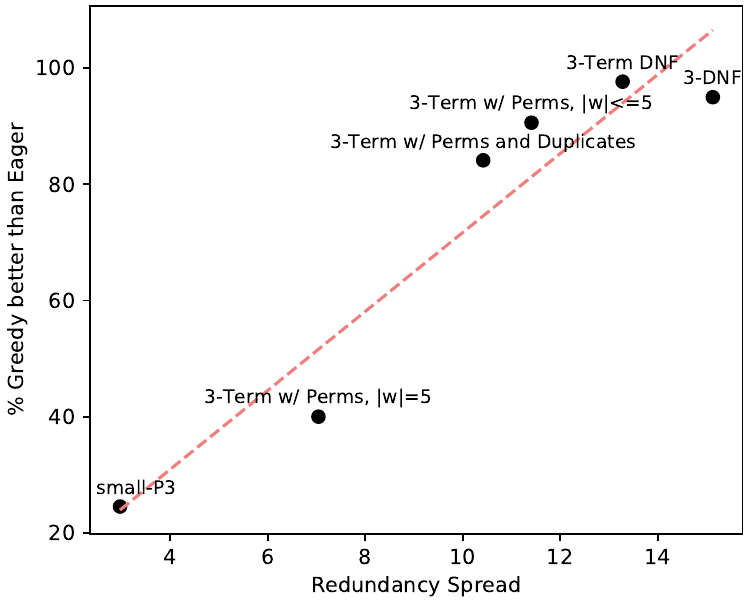}
    \vspace{-0.05in}
    \caption{Relation between {\em Redundancy Spread}, and {\em \% Greedy better than Eager}. Data from Tables \ref{table-small-results-overview_graph} and \ref{table-small-results-greedy_vs_eager}. The red line is the best fit linear approximation.
    %, highlighting how close all points are to this line.
    }.   
    \label{fig-preformance_vs_red_spread}
  \vspace{-0.05in}
\end{figure}

% --- Plots: teaching size (Elias) Vs. program length (bits) (for greedy TBs)
% Same as right side of the Fig. 2 of supplement
%\vspace{-0.7cm}
%\begin{figure}[htb]
\begin{figure}[t] %tb]
\centering
\includegraphics[width=0.95\linewidth]{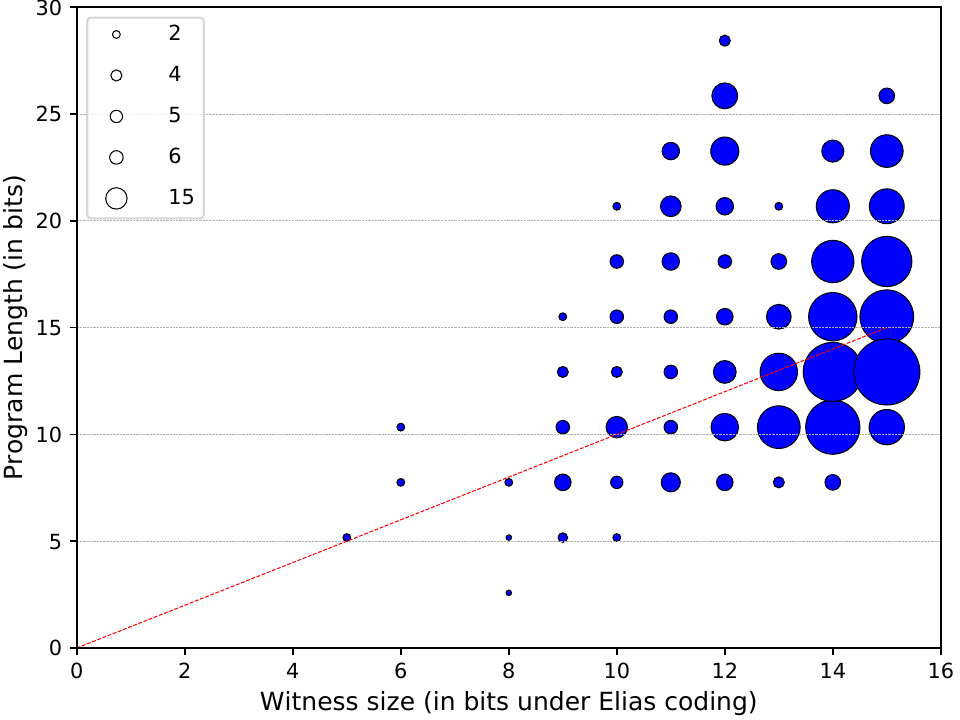} 
\vspace{-0.05in}
\caption{Greedy: Program length versus witness size, using Elias coding (\cite{DBLP:journals/tit/Elias75}). Circles above the unit diagonal denote witness smaller than program, with size of circle = number of programs.}
\label{fig:2scat-bits-greedy}
\vspace{-0.05in}
\end{figure}

% ----------
\section{Discussion}
\label{sec-disc}
The notion of redundancy we explore in this paper is very old, dating back to numeral systems (e.g., IIII and IV being two alternative representations of the number 4 in Roman numerals). Leibniz was interested in prime decomposition as a tool for his {\em characteristica universalis} in which the representations and concepts could have a bijective mapping. 
In numeration systems, this is possible. Indeed, it was B\"ohm himself (the 'father' of P3) who proved that every natural number has a unique representation in bijective base-$k$ ($k \geq 1$) \cite{bohm1964family}. 
% Jose: the principle of indiscernibles is basically about the consistency graph for concepts, but not really about representations. Also I check that Leibniz didn't express that its characteristica universalis should be bijective, only the "objects", but not their compositions.
However, we also know that in many other languages, such as Turing-complete languages,  this is not possible.
%, and even for some other formal languages for which the equivalence classes can be calculated effectively, using a canonical representative for each concept would lead to languages that are very cryptic, such as a numbering %n enumeration ---a coding--- of all the equivalence classes. 

%This is the first time, to our knowledge, that the problem of teaching has been formally studied without assuming the bijection of representations and concepts, or even assuming that teacher and learner can calculate the equivalence classes. 

The more realistic view of teaching adopted in this paper modifies some key elements of the traditional machine teaching setting and gives more relevance to the representation language. Given the high predictiveness for some teaching indicators, the new metric of redundancy spread introduced in this paper, could be used to anticipate what language representations are better than others for teaching and learning. The size of representations becomes a natural way of imposing an order on them, which can be used in the teaching protocols. Of course, all this comes at a cost, and some of the protocols seen here, such as Greedy, are computationally expensive. However, none of them requires the calculation of equivalence classes, which may also be expensive (or undecidable) for some formal languages. The most surprising finding is that a protocol such as Greedy that is designed to teach all representations ends up teaching more concepts than Eager, which aims at only teaching concepts. The experimental and theoretical results indicate that there is a wide spectrum between these two protocols, and future work could explore some other teaching protocols for representations that are somewhat in between these two, or have better theoretical or statistical properties. 
%Shannon... "Redundancy in English"

% - AIStats: Program embeddings: https://openreview.net/pdf?id=69MODRAL5u8
%Languages for induction - Leibniz - syntactic sugar - terse
%https://www.sciencedirect.com/science/article/pii/S0010027722000439?via%3Dihub

%\todo{Limitations: cost of Greedy..}

% ---
%\appendix

% ---
%\section*{Ethical Statement}
%There are no ethical issues.

% ---
%\subsubsection*{Acknowledgements}\snJHO{Remove acknowledgments for anonymous submission, but let's not forget them if the paper is accepted!}
%We wish to thank Jose Maria Sempere, María Alpuente, Salvador Santiago and Salvador Lucas for helpful discussions and pointers about notions related to redundancy.

% All acknowledgments go at the end of the paper, including thanks to reviewers who gave useful comments, to colleagues who contributed to the ideas, and to funding agencies and corporate sponsors that provided financial support.  To preserve the anonymity, please include acknowledgments \emph{only} in the camera-ready papers.

% ----------
% References never overlap previous content:
\FloatBarrier  % from placeins package

%% The file named.bst is a bibliography style file for BibTeX 0.99c
%\bibliographystyle{named}
\bibliography{main}

\end{document}